\documentclass[11pt]{article}

\usepackage[margin=0.8in]{geometry}
\usepackage{amsmath,amssymb,amsthm,mathtools}
\usepackage[T1]{fontenc}
\usepackage[hidelinks]{hyperref}
\usepackage{microtype}
\usepackage{enumitem}

\allowdisplaybreaks
\setlist{nosep,leftmargin=*}

\newtheorem{theorem}{Theorem}[section]
\newtheorem{proposition}[theorem]{Proposition}
\newtheorem{lemma}[theorem]{Lemma}
\newtheorem{corollary}[theorem]{Corollary}
\theoremstyle{definition}

\theoremstyle{remark}
\newtheorem{remark}[theorem]{Remark}

\newcommand{\Hh}{\mathcal H}
\newcommand{\E}{\mathbb E}
\newcommand{\Pp}{\mathbb P}
\newcommand{\one}{\mathbf 1}
\newcommand{\ip}[2]{\left\langle #1,#2\right\rangle}
\newcommand{\norm}[1]{\left\lVert #1\right\rVert}
\newcommand{\argminop}{\operatorname*{arg\,min}}
\newcommand{\conv}{\operatorname{conv}}
\newcommand{\supp}{\operatorname{supp}}
\newcommand{\Risk}{R_P}

\title{
Sharp margin-based generalization bounds for realizable SVM
}
\author{Steve Hanneke and Aryeh Kontorovich}
\date{\today}

\begin{document}
\maketitle

\begin{abstract}
Let the exact homogeneous hard-margin support vector machine be trained on
\(m\) independent observations from a Borel probability law on a real Hilbert
space. We prove that, with score zero counted as an error, there is a universal
numerical constant \(C\) such that
\[
 \Pp\left(
   \gamma_m>0,\quad
   \Risk(u_m)>
   \frac{C}{m}
   \left(
      K_m+\log\frac1\delta
   \right)
 \right)
 \le \delta .
\]
Here \(\gamma_m\) is the empirical homogeneous margin, \(u_m\) is the exact
minimum-norm unit-margin separator, \(r_m\) is the largest training radius,
and \(K_m:=r_m^2\norm{u_m}^2=r_m^2/\gamma_m^2\) on
\(\{\gamma_m>0\}\).

The proof is driven by a deterministic deletion problem. Given vectors
\(x_1,\ldots,x_n\) in the unit ball, delete a set \(B\) of constraints and
let \(u_B\) be the closest point to the origin that satisfies every retained
unit-margin constraint. Suppose that \(\norm{u_B}^2\le k\) and that every
deleted vector has nonpositive score under \(u_B\). We prove that a family of
such deletion sets of cardinality \(q\) has size at most
\(\exp(8k+2q)\).

The conceptual step is an exact identity obtained from the KKT
representation of \(u_B\). For a random deletion set, the identity converts
the mean squared spread of the separators into a weighted sum of score
deficits. It therefore forces a coordinate whose deletion status separates
the two conditional means by a quantitatively large amount. Revealing that
coordinate decreases the conditional separator variance enough to control
the binary entropy of the split. An entropy induction gives the deletion
count, and an exact factorial ghost-sample identity converts that count into
the stated high-probability SVM bound.
\end{abstract}

\section{Introduction and geometric overview}
\label{sec:introduction}

The result concerns the exact homogeneous hard-margin SVM; ``homogeneous''
means that there is no intercept term. The final statistical argument is
short once one understands a deterministic question about deleting
constraints. This section gives the geometric picture and the proof
architecture. Section~\ref{sec:definitions} then defines every object used in
the paper before the theorem statements.

\subsection{Why deletion sets appear}

Imagine drawing \(m+\ell\) labeled observations. Choose \(\ell\) of them as
a prospective test set and train the SVM on the remaining \(m\). If every
chosen test point lies within the training radius and is misclassified by
the separator trained on the complement, then the chosen indices form a
special deletion set. An exact exchangeability identity shows that the
\(\ell\)-th moment of the in-radius test error is the expected number of such
deletion sets, divided by \(\binom{m+\ell}{\ell}\).

The statistical problem is therefore reduced to the following deterministic
one: how many subsets can simultaneously behave as valid sets of deleted,
misclassified constraints while the separator trained on the complement has
bounded squared norm?

\subsection{The geometry of one deletion state}

Fix vectors \(x_1,\ldots,x_n\) in the unit ball. Each retained vector defines
a closed halfspace in parameter space,
\[
 H_i:=\{u\in\Hh:\ip{u}{x_i}\ge1\}.
\]
For a deletion set \(B\), the retained feasible region is
\[
 \mathcal C_B=\bigcap_{i\notin B}H_i.
\]
Whenever this intersection is nonempty, \(u_B\) is its closest point to the
origin. Thus deleting constraints enlarges the feasible region, and \(u_B\)
records how its nearest point moves.

The additional condition
\[
 \ip{u_B}{x_i}\le0\qquad(i\in B)
\]
is much stronger than merely omitting the deleted constraints. A retained
index has score at least \(1\), while a deleted index has score at most
\(0\). Hence two distinct admissible deletion sets necessarily produce
different separators. Geometrically, each changed deletion bit forces a
unit-sized score discrepancy in the direction of the corresponding vector.

\subsection{The conceptual turning point}

The closest feasible point has an exact KKT representation
\[
 u_B=\sum_{i\notin B}\alpha_i(B)x_i,
 \qquad
 \alpha_i(B)\ge0,
 \qquad
 \sum_i\alpha_i(B)=\norm{u_B}^2.
\]
The last equality is crucial. The bound \(\norm{u_B}^2\le k\) is therefore
not merely a radius bound on the cloud of possible separators: it is also a
bound on the total positive coefficient mass available in every state.

Now randomize the deletion set and write
\[
 U:=u_B,\qquad
 \bar u:=\E U,\qquad
 t:=\E\norm{U-\bar u}^2.
\]
Averaging the KKT representation gives the exact identity
\[
 t=
 \sum_i \E\alpha_i(B)
 \left(1-\ip{\bar u}{x_i}\right).
\]
This is the main insight of the proof. It says that the total squared spread
of the separator cloud must be witnessed by coordinates at which the mean
separator falls below unit score.

For a variable coordinate \(i\), let \(p_i\) be its deletion probability and
let \(g_i\) be the difference between its mean score when retained and its
mean score when deleted. Retention gives score at least \(1\), deletion gives
score at most \(0\), so \(g_i\ge1\). The identity above and the coefficient
mass bound imply that, whenever \(t>0\), some coordinate satisfies
\[
 p_i g_i\ge\frac{t}{k}.
\]
Conditioning on whether that coordinate is deleted separates the two
conditional mean separators by at least \(g_i\) in the direction \(x_i\).
The law of total variance then shows that revealing this one bit removes at
least
\[
 p_i(1-p_i)g_i^2
\]
from the unexplained separator variance.

This is the ``aha'' step: a combinatorial bit of deletion information is
forced to create a geometric drop in variance, and the exact KKT identity
finds a coordinate for which that drop is large enough.

\subsection{How the proof closes}

The proof has four transparent stages.

\begin{enumerate}
\item A concave function
\[
 \Phi_k(t)=t\log\frac{e^2k}{t}
\]
is calibrated so that the decrease of \(4\Phi_k(t)\) under the informative
split controls the binary entropy of that split. If a coordinate is deleted
with probability greater than \(1/2\), its binary entropy is instead bounded
directly by twice its deletion probability.

\item Induction over the variable coordinates gives
\[
 H(B)\le4\Phi_k(t)+2\E|B|
       \le8k+2\E|B|.
\]

\item For the uniform law on a fixed-cardinality family
\(\mathcal F\subseteq\binom{[n]}q\), this becomes
\[
 \log|\mathcal F|\le8k+2q.
\]

\item The factorial ghost-sample identity turns this deterministic count into
a factorial-moment bound for the in-radius population error. A separate
order-statistic estimate controls the population mass beyond the largest
training radius.
\end{enumerate}

Throughout the paper, \(k\) is called the \emph{squared-norm budget}: it is
an upper bound on \(\norm{u_B}^2\).

No support-vector cardinality estimate and no sample-compression theorem is
used. In particular, the proof is independent of the support-vector
assertion in Theorem~19 of Hanneke and Kontorovich~\cite{HK21}.

\section{Notation, definitions, and theorem statements}
\label{sec:definitions}

\subsection{General conventions}

The symbol \(\Hh\) denotes the ambient real Hilbert space,
\(\ip{\cdot}{\cdot}\) its inner product, and \(\norm{\cdot}\) its norm. Its
closed unit ball is
\[
 \mathbb B_{\Hh}:=\{x\in\Hh:\norm x\le1\}.
\]
For a positive integer \(n\),
\[
 [n]:=\{1,\ldots,n\},
 \qquad
 \binom{[n]}q:=\{B\subseteq[n]:|B|=q\}.
\]
For a finite set \(V\), \(2^V\) denotes its power set. The indicator of an
event \(E\) is \(\one_E\). The symbols \(\Pp\) and \(\E\) denote probability
and expectation under the law currently being discussed. The logarithm is
natural, so all Shannon entropies are measured in nats.

If \(\nu\) is a probability law on \(2^V\), then
\[
 \supp(\nu):=\{A\subseteq V:\nu(A)>0\},
 \qquad
 H_\nu(B):=-\sum_{A\subseteq V}\nu(A)\log\nu(A).
\]
Zero-mass terms are omitted. When the law is clear, we write simply \(H(B)\).

\subsection{The statistical SVM model}

Let \(P\) be a Borel probability law on
\(\Hh\times\{-1,+1\}\). Its input marginal is assumed to be carried by a
separable closed linear subspace \(\Hh_0\subseteq\Hh\). For a sample size
\(m\ge1\), draw
\[
 S_m=((X_1,Y_1),\ldots,(X_m,Y_m))\sim P^m,
\]
where \(P^m\) is the \(m\)-fold product law. Define the signed sample vectors
and the largest training radius by
\[
 Z_i:=Y_iX_i,
 \qquad
 r_m:=\max_{1\le i\le m}\norm{X_i}.
\]

The empirical homogeneous margin is
\[
 \gamma_m
 :=
 \sup_{\norm w=1}\min_{1\le i\le m}\ip{w}{Z_i}.
\]
On the event \(\{\gamma_m>0\}\), the unit-margin constraints are feasible,
and the exact hard-margin SVM parameter is
\[
 u_m
 :=
 \argminop_{u\in\Hh}
 \left\{
   \frac12\norm u^2:
   \ip{u}{Z_i}\ge1\quad(1\le i\le m)
 \right\}.
\]
We set \(u_m:=0\) on \(\{\gamma_m\le0\}\). The random normalized squared norm
is
\[
 K_m:=r_m^2\norm{u_m}^2.
\]
Thus \(K_m\) is a statistic of the training sample; it is not the
deterministic budget \(k\) used in the deletion theorem below.

For a realized sample, the population misclassification risk, with score
zero counted as an error, is
\[
 \Risk(u_m)
 :=
 \int_{\Hh\times\{-1,+1\}}
 \one_{\{y\ip{u_m}{x}\le0\}}\,P(dx,dy).
\]
Equivalently, it is the conditional error probability on an independent test
pair \((X,Y)\sim P\).

We split this population risk at the random radius \(r_m\):
\[
 R_m^{\le}
 :=
 \int
 \one_{\{y\ip{u_m}{x}\le0,\ \norm x\le r_m\}}\,P(dx,dy),
\]
\[
 \rho_m
 :=
 \int
 \one_{\{\norm x>r_m\}}\,P(dx,dy)
 =
 P(\norm X>r_m).
\]
Then
\[
 \Risk(u_m)\le R_m^{\le}+\rho_m.
 \tag{2.1}\label{eq:risk-split}
\]
The quantity \(\rho_m\) is a \emph{population} tail probability evaluated at
the empirical maximum \(r_m\). It is not a tail probability under the
empirical measure; under the empirical measure the mass above \(r_m\) would
be zero.

The confidence parameter in the terminal theorem is
\(\delta\in(0,1)\).

The separable-support assumption is used only for measurability. On the
full-probability event that all sample points belong to \(\Hh_0\), orthogonal
projection onto \(\Hh_0\) preserves every constraint value. Hence
feasibility, the minimum-norm separator, the margin, \(K_m\), and the risk
agree with the corresponding quantities computed in \(\Hh_0\). The Borel
versions used here are justified in Appendix~\ref{sec:convex-appendix}. The
assumption is automatic when \(\Hh\) is separable.

\subsection{The deterministic deletion model}

Fix \(n\ge1\) and deterministic vectors
\[
 x_1,\ldots,x_n\in\mathbb B_{\Hh}.
\]
A set \(B\subseteq[n]\) is called a \emph{deletion set}; its complement
\[
 A_B:=[n]\setminus B
\]
is the retained index set. The feasible region after deleting \(B\) is
\[
 \mathcal C_B
 :=
 \left\{
   u\in\Hh:
   \ip{u}{x_i}\ge1\quad(i\in A_B)
 \right\}.
\]
If \(\mathcal C_B\ne\varnothing\), its unique minimum-norm element is
\[
 u_B
 :=
 \argminop_{u\in\mathcal C_B}\frac12\norm u^2.
\]
When \(A_B=\varnothing\), the convention is
\(\mathcal C_B=\Hh\) and \(u_B=0\).

Let \(k>0\). A deletion state \(B\) is \emph{admissible at squared-norm
budget \(k\)} if
\[
 \mathcal C_B\ne\varnothing,
 \qquad
 \norm{u_B}^2\le k,
 \qquad
 \ip{u_B}{x_i}\le0\quad(i\in B).
 \tag{2.2}
\]
Here \(k\) is a real upper bound on the squared separator norm. It is neither
a dimension nor a number of observations.

A fixed-cardinality deletion family is a set
\[
 \mathcal F\subseteq\binom{[n]}q,
\]
where the integer \(q\in\{0,\ldots,n\}\) is the number of deleted
coordinates in every state. Thus \(\mathcal F\) is a collection of subsets,
not a probability distribution. In the uniform deletion-family theorem,
every \(B\in\mathcal F\) is required to satisfy (2.2).

\subsection{Random deletion laws}

The entropy induction needs a slightly more general model that is stable
under conditioning. Let \(J\) and \(V\) be disjoint finite index sets.
Indices in \(J\) are permanently retained, while indices in \(V\) may be
retained or deleted. Given vectors \(x_i\in\mathbb B_{\Hh}\) for
\(i\in J\cup V\) and a set \(B\subseteq V\), define
\[
 \mathcal C_B^{J,V}
 :=
 \left\{
   u\in\Hh:
   \ip{u}{x_i}\ge1
   \quad(i\in J\cup(V\setminus B))
 \right\},
\]
and, whenever this set is nonempty,
\[
 u_B:=\argminop_{u\in\mathcal C_B^{J,V}}\frac12\norm u^2.
\]
The superscript is suppressed because \(J\) and \(V\) remain fixed within
each application.

A \emph{deletion law} (or deletion measure) is simply a probability law
\(\nu\) on \(2^V\). It is unrelated to the empirical distribution of the
training observations. The law \(\nu\) is \emph{admissible at squared-norm
budget \(k\)} if every \(B\in\supp(\nu)\) satisfies
\[
 \mathcal C_B^{J,V}\ne\varnothing,
 \qquad
 \norm{u_B}^2\le k,
 \qquad
 \ip{u_B}{x_i}\le0\quad(i\in B).
 \tag{2.3}
\]

For \(B\sim\nu\), define the random separator, its mean, and its mean squared
spread by
\[
 U:=u_B,
 \qquad
 \bar u:=\E U,
 \qquad
 t:=\E\norm{U-\bar u}^2.
 \tag{2.4}
\]
The quantity \(t\) is the trace of the covariance of the Hilbert-valued
random variable \(U\); concretely, it is the average squared distance of the
possible separators from their mean. Since
\[
 t=\E\norm U^2-\norm{\bar u}^2,
\]
admissibility implies \(0\le t\le k\).

For each \(B\in\supp(\nu)\), Appendix~\ref{sec:convex-appendix} supplies
nonnegative KKT coefficients \(\alpha_i(B)\) satisfying
\[
 U=\sum_{i\in J\cup V}\alpha_i(B)x_i,
 \qquad
 \alpha_i(B)=0\quad(i\in B),
 \tag{2.5}
\]
\[
 \alpha_i(B)>0
 \ \Longrightarrow\
 \ip{U}{x_i}=1,
 \qquad
 \sum_{i\in J\cup V}\alpha_i(B)=\norm U^2\le k.
 \tag{2.6}
\]
The certificate need not be unique; any one may be chosen separately for
each state.

For a variable coordinate \(i\in V\), write
\[
 p_i:=\Pp(i\in B).
\]
When \(0<p_i<1\), define its conditional retained score, conditional deleted
score, and score jump by
\[
 a_i:=\E[\ip{U}{x_i}\mid i\notin B],
 \qquad
 b_i:=\E[\ip{U}{x_i}\mid i\in B],
 \qquad
 g_i:=a_i-b_i.
 \tag{2.7}
\]
Admissibility gives \(a_i\ge1\), \(b_i\le0\), and hence \(g_i\ge1\).

Finally, define the binary entropy
\[
 h(p):=-p\log p-(1-p)\log(1-p),
 \qquad 0\le p\le1,
 \tag{2.8}
\]
with \(0\log0:=0\), and the concave variance potential
\[
 \Phi_k(t)
 :=
 \begin{cases}
 t\log\dfrac{e^2k}{t},&t>0,\\[2mm]
 0,&t=0,
 \end{cases}
 \qquad 0\le t\le k.
 \tag{2.9}
\]
Its derivatives on \((0,k]\) are
\[
 \Phi_k'(t)=\log\frac{ek}{t},
 \qquad
 \Phi_k''(t)=-\frac1t.
 \tag{2.10}
\]
Thus \(\Phi_k\) is increasing and concave on \([0,k]\), and
\[
 \Phi_k(t)\le\Phi_k(k)=2k.
 \tag{2.11}
\]

\subsection{Ghost-sample notation}

The statistical reduction uses the following notation. Fix integers
\(m,\ell\ge1\), set
\[
 N:=m+\ell,
\]
and draw the pooled sample
\[
 W=((X_1,Y_1),\ldots,(X_N,Y_N))\sim P^N.
\]
For \(B\in\binom{[N]}\ell\), let \(A_B=[N]\setminus B\). If the unit-margin
constraints on \(A_B\) are feasible, define
\[
 \widehat u_B
 :=
 \argminop_u
 \left\{
   \frac12\norm u^2:
   Y_i\ip{u}{X_i}\ge1\quad(i\in A_B)
 \right\},
\]
\[
 r_B:=\max_{i\in A_B}\norm{X_i},
 \qquad
 \widehat K_B:=r_B^2\norm{\widehat u_B}^2.
\]
Here \(\ell\) is the number of held-out observations, \(N\) is the pooled
sample size, and \(\widehat K_B\) is the normalized squared norm of the
separator trained on the complement of \(B\).

For a deterministic cutoff \(\kappa\ge1\), let
\(I_B^{(\kappa)}(W)\) be the indicator that all of the following hold:
\[
 \begin{split}
 &A_B\text{ is feasible},
 \qquad
 \widehat K_B\le\kappa,\\
 &\norm{X_i}\le r_B,
 \qquad
 Y_i\ip{\widehat u_B}{X_i}\le0
 \quad(i\in B).
 \end{split}
 \tag{2.12}\label{eq:ghost-indicator}
\]
Thus \(I_B^{(\kappa)}(W)=1\) precisely when \(B\) is a held-out set whose
points lie within the retained radius, are misclassified by the separator
trained on the retained sample, and whose normalized squared separator norm
is at most \(\kappa\).

Lowercase \(x_i\) always denotes a deterministic vector in the deletion
theorem. Uppercase \(X_i\) denotes a random input observation. The symbols
\(u_B\) and \(\widehat u_B\) distinguish the abstract deterministic separator
from the separator fitted to a pooled random sample.

\subsection{Main results}

\begin{theorem}[Terminal SVM bound]
\label{thm:main-svm}
For every \(m\ge1\) and every \(\delta\in(0,1)\),
\[
 \boxed{
 \Pp\left(
   \gamma_m>0,\quad
   \Risk(u_m)>
   \frac{C_*}{m}
   \left(
      K_m+\log\frac1\delta
   \right)
 \right)
 \le\delta,}
\]
where one may take
\[
 C_*=(19e^3+1)(1+\log2)<648.
\]
On \(\{\gamma_m>0\}\), Lemma~\ref{lem:margin-norm} gives
\[
 K_m=\frac{r_m^2}{\gamma_m^2}.
\]
Hence the threshold is equivalently
\[
 \frac{C_*}{m}
 \left(
   \frac{r_m^2}{\gamma_m^2}+\log\frac1\delta
 \right).
\]
In particular, the theorem holds with the round numerical constant \(648\).
\end{theorem}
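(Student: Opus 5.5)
The plan is to follow the architecture of the introduction. We split the risk via \eqref{eq:risk-split} into the in-radius part \(R_m^{\le}\) and the tail mass \(\rho_m\), and bound each with failure probability \(\delta/2\). The threshold is then obtained by peeling over dyadic values of \(K_m\). That is, for each cutoff \(\kappa=2^j\) (with \(\kappa\ge1\), which is valid since \(K_m\ge1\) on \(\{\gamma_m>0\}\) by Lemma~\ref{lem:margin-norm}, as \(\gamma_m\le r_m\)), we bound
\[
\Pp\bigl(\gamma_m>0,\ K_m\le\kappa,\ R_m^{\le}>\epsilon_\kappa\bigr)
\]
with confidence \(\delta_j=\delta 2^{-j-1}\). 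The \(\log(1/\delta_j)\) overhead is then \(O(\log\kappa+\log(1/\delta))\le O(K_m+\log(1/\delta))\), and the factor \((1+\log 2)\) in \(C_*\) presumably absorbs this union bound.

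For the in-radius part we use the ghost-sample identity. Fix \(\ell\) and \(N=m+\ell\). By exchangeability, the \(\ell\)-th moment \(\E[(R_m^{\le})^\ell\one\{\gamma_m>0,K_m\le\kappa\}]\) is at most \(\E\sum_{B\in\binom{[N]}\ell}I_B^{(\kappa)}(W)/\binom N\ell\). The inequality arises because \(\ell\) i.i.d.\ test points are all in-radius errors with conditional probability \((R_m^{\le})^\ell\), and every ordering of them gives the same event. We then bound the count deterministically on each realization of \(W\). First rescale by \(r_B\)? The radius depends on \(B\), which is the technical wrinkle. I would handle it by rescaling by the pooled maximum radius among points that matter. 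Alternatively, note that if \(I_B=1\) then every deleted point satisfies \(\norm{X_i}\le r_B\), so \(r_B=r_{\max}:=\max_{i\le N}\norm{X_i}\), which is the same for all such \(B\). Dividing by \(r_{\max}\), the vectors \(x_i=Z_i/r_{\max}\in\mathbb B_\Hh\) have separator \(u_B=r_{\max}\widehat u_B\) with \(\norm{u_B}^2=\widehat K_B\le\kappa\), and deleted scores are \(\le0\). Hence \(\{B:I_B=1\}\) is an admissible family in \(\binom{[N]}\ell\) at budget \(\kappa\). The deletion-count theorem (stated earlier as \(\log|\mathcal F|\le 8k+2q\)) then gives
\[
\sum_B I_B\le e^{8\kappa+2\ell}.
\]
Thus \(\E[(R^{\le})^\ell;\dots]\le e^{8\kappa+2\ell}/\binom{N}{\ell}\le e^{8\kappa+2\ell}(\ell/N)^\ell\). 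Markov at level \(\epsilon\) gives probability at most \(e^{8\kappa}(e^2\ell/(m\epsilon))^\ell\). Choosing \(\ell\approx 8\kappa+\log(1/\delta_j)\) and \(\epsilon= e^3\ell/m\) makes this at most \(e^{8\kappa-\ell}\le\delta_j\). This yields \(\epsilon_\kappa=O((\kappa+\log(1/\delta_j))/m)\), and constants of order \(e^3\) are consistent with the stated \(C_*\).

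For the tail mass \(\rho_m=P(\norm X>r_m)\) we use the order-statistic fact that \(\Pp(\rho_m>\epsilon)\le(1-\epsilon)^m\le e^{-\epsilon m}\). This holds because \(\rho_m>\epsilon\) forces all \(m\) samples to lie in a set of probability at most \(1-\epsilon\), namely below the \(\epsilon\)-upper quantile. Atoms need care here, and I would use the right-continuous tail function. This gives \(\rho_m\le\log(2/\delta)/m\), which accounts for the \(+1\) in \(19e^3+1\).

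The main obstacle is the ghost-sample step: making the moment inequality exact with ties, measurability, and the \(B\)-dependent radius, and checking that the condition ``\(\norm{X_i}\le r_B\)'' for all \(i\in B\) really pins every admissible \(B\) to the common scaling. The remainder is bookkeeping of constants and the dyadic union bound over \(\kappa\), which I would tune last to reach \(C_*\).
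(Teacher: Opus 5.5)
Your proposal is correct and follows the paper's proof essentially step for step: the split \(\Risk(u_m)\le R_m^{\le}+\rho_m\), the factorial ghost identity, the observation that \(\norm{X_i}\le r_B\) on \(B\) forces \(r_B=\max_{i\le N}\norm{X_i}\) (so all counted sets form one admissible family at budget \(\kappa\)), Markov with \(\ell=\lceil 8\kappa+\log(1/\eta)\rceil\) and threshold \(e^3\ell/m\), and the quantile argument for \(\rho_m\). The only deviation is cosmetic: you peel over \(\kappa=2^j\) rather than the paper's integer cutoffs \(\kappa=j\), which works equally well, except that for \(j\ge0\) your weights \(\delta2^{-j-1}\) sum to \(\delta\) rather than \(\delta/2\), so you should use \(\delta2^{-j-2}\) instead; the resulting \(\ell_j\le17K_m+3\log(2/\delta)\) still fits under the paper's \(19(K_m+\log(2/\delta))\).
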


\begin{theorem}[Uniform deletion-family bound]
\label{thm:uniform-count}
Let \(x_1,\ldots,x_n\in\mathbb B_{\Hh}\), let \(k>0\), and let
\(\mathcal F\subseteq\binom{[n]}q\). Suppose that every
\(B\in\mathcal F\) is admissible at squared-norm budget \(k\), in the sense
of (2.2). Then
\[
 \boxed{|\mathcal F|\le e^{8k+2q}.}
\]
\end{theorem}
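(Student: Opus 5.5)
We prove the entropy inequality
\[
 H_\nu(B)\le 4\Phi_k(t)+2\,\E|B|
\]
for every deletion law \(\nu\) on \(2^V\) (with some fixed set \(J\) of permanently retained indices) that is admissible at budget \(k\) in the sense of (2.3). The theorem then follows by taking \(J=\varnothing\), \(V=[n]\), and \(\nu\) uniform on \(\mathcal F\). In that case \(H_\nu(B)=\log|\mathcal F|\) and \(\E|B|=q\), and (2.11) gives \(4\Phi_k(t)\le 8k\). The entropy inequality is proved by strong induction on the number of variable coordinates that are nonconstant under \(\nu\).

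\emph{Base case.} If every coordinate of \(V\) is constant under \(\nu\), then \(B\) is deterministic, so \(H=0\) and the bound holds trivially. Since distinct admissible states have distinct separators (retained scores are \(\ge1\), deleted scores are \(\le0\)), \(t=0\) also forces \(B\) to be a.s.\ constant. So in the inductive step we may assume \(t>0\).

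\emph{Finding an informative coordinate.} Average the KKT representation (2.5)--(2.6). Pairing \(U-\bar u\) with \(U\) gives
\[
 t=\E\ip{U-\bar u}{U}=\sum_i\E\,\alpha_i(B)\bigl(1-\ip{\bar u}{x_i}\bigr),
\]
using complementary slackness \(\ip{U}{x_i}=1\) whenever \(\alpha_i>0\). For \(i\in J\), \(\ip{\bar u}{x_i}\ge1\), so these terms are nonpositive. For \(i\in V\), \(\alpha_i\) vanishes on \(\{i\in B\}\), and \(\ip{\bar u}{x_i}=(1-p_i)a_i+p_ib_i=a_i-p_ig_i\ge1-p_ig_i\). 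Hence \(1-\ip{\bar u}{x_i}\le p_ig_i\), and \(t\le\sum_{i\in V}\E\alpha_i\,p_ig_i\le k\max_i p_ig_i\), using \(\sum_i\alpha_i\le k\). So some nonconstant \(i\) satisfies \(p_ig_i\ge t/k\).

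\emph{Splitting on that coordinate.} Condition on whether \(i\in B\). Each conditional law is again admissible at budget \(k\): in the branch \(i\notin B\), move \(i\) into \(J\); in the branch \(i\in B\), delete \(i\) permanently (drop it from \(V\)). Each branch has fewer nonconstant coordinates, so the induction hypothesis applies. By the chain rule,
\[
 H(B)=h(p_i)+p_iH(B\mid i\in B)+(1-p_i)H(B\mid i\notin B),
\]
and \(\E|B|=p_i+p_i\E[|B\setminus\{i\}|\mid i\in B]+(1-p_i)\E[|B|\mid i\notin B]\). The law of total variance gives \(t=p_it_1+(1-p_i)t_0+p_i(1-p_i)\norm{\bar u_1-\bar u_0}^2\), where \(t_1,t_0\) and \(\bar u_1,\bar u_0\) are the conditional spreads and means. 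Since \(\norm{x_i}\le1\), \(\norm{\bar u_0-\bar u_1}\ge g_i\), so the averaged conditional spread is \(t'\le t-p_i(1-p_i)g_i^2\). Concavity of \(\Phi_k\) (Jensen) and monotonicity then give \(p_i\Phi_k(t_1)+(1-p_i)\Phi_k(t_0)\le\Phi_k(t-p_i(1-p_i)g_i^2)\). It therefore suffices to show, with \(p=p_i\), \(g=g_i\),
\[
 h(p)\le 2p+4\bigl[\Phi_k(t)-\Phi_k\bigl(t-p(1-p)g^2\bigr)\bigr].
\]

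\emph{The key one-dimensional inequality (main obstacle).} If \(p>1/2\), then \(h(p)\le 2p\), because \(h(p)\le\log 2<1<2p\), and the bracket is nonnegative. If \(p\le1/2\), use \(h(p)\le p\log(e/p)\). By concavity, the bracket is at least \(p(1-p)g^2\,\Phi_k'(t)\ge \tfrac12 pg^2\log(ek/t)\). Set \(s=pg\ge t/k\), so \(k/t\ge 1/s\), and also \(g\ge1\). Then
\[
 4\cdot\tfrac12pg^2\log\frac{ek}{t}\ge 2\,s g\log\frac{e}{s}\ge 2p\log\frac{e}{s}.
\]
This must dominate \(p\log(e/p)-2p\). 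When \(g\) is large, \(\log(1/s)\) could be negative; but then the term \(2sg\) itself is large. Making this precise requires care near the endpoint \(t-p(1-p)g^2\approx0\), where \(\Phi_k'\) blows up; there one should use the chord slope of \(\Phi_k\) directly instead of the derivative at \(t\). Balancing these regimes to land exactly on the constants \(4\) and \(2\) is the step I expect to require the most work. The fallback is to prove the inequality with slightly worse constants and then adjust.
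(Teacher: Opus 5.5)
Your architecture is the paper's proof of Theorem~\ref{thm:entropy}: entropy induction with potential $4\Phi_k(t)+2\E|B|$, the KKT--covariance identity giving a coordinate with $p_ig_i\ge t/k$, the law of total variance with $\norm{\bar u_0-\bar u_1}\ge g_i$, and Jensen plus monotonicity for $\Phi_k$. Choosing the informative coordinate first and then asking whether $p_i>1/2$, instead of the paper's order, is harmless. The gap is the scalar inequality you flag as the main obstacle. It is exactly what Lemma~\ref{lem:binary-payment} supplies in the paper, and you leave it unproved. Your stated fallback of ``slightly worse constants'' would not give $e^{8k+2q}$. Your worry about the endpoint is also unfounded. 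By concavity, the chord slope of $\Phi_k$ on $[t-\Delta,t]$ is at least $\Phi_k'(t)=\log(ek/t)$, which is finite because $t>0$. Hence $\Phi_k(t)-\Phi_k(t-\Delta)\ge\Delta\log(ek/t)$ however close $t-\Delta$ is to $0$. The blow-up of $\Phi_k'$ near $0$ can only help, so no chord-slope refinement is needed.

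After the tangent bound the inequality is elementary, and the $2p$ slack you keep makes it easier than the paper's version. Put $L:=\log(ek/t)$. From $t\le k$ and $pg\ge t/k$ you get $L\ge1$ and $L\ge1+\log\frac1p-\log g$. Using $h(p)\le p\log(e/p)$ and $1-p\ge\frac12$, it suffices to show $\log\frac1p\le1+2g^2L$. If $2g^2\ge\log\frac1p$, this follows from $L\ge1$. Otherwise $g^2<\frac12\log\frac1p<\frac1{2p}$, so $2g^2L\ge2L\ge2+2\log\frac1p-\log g^2>2+\log2+\log\frac1p$.

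Your own chain through $2p\log(e/s)$ also closes, but only partly as written. It is valid when $s\le e$ and $g^2\le e^3/p$. In the complementary regime you must switch to $L\ge1$, which suffices on its own.

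The paper instead proves the stronger $h(p)\le4\Delta\log(ek/t)$ without the $2p$ term. It splits on $p\ge t/k$, where $\Delta\ge p/2$, versus $p<t/k$. In the second case it writes $p=rt/k$ and uses $\Delta\ge(t/k)^2/(2p)$ together with $r^2\log(1/r)\le1/(2e)$. With either argument inserted, your proof is complete with the stated constants.
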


\begin{corollary}[Additional squared-norm windows do not change the count]
\label{cor:finite-shell}
Under the hypotheses of Theorem~\ref{thm:uniform-count}, suppose additionally
that
\[
 a\le\norm{u_B}^2\le b\le k
 \qquad(B\in\mathcal F)
\]
for arbitrary real numbers \(a\le b\). Then
\[
 |\mathcal F|\le e^{8k+2q}.
\]
In particular, no lower bound on \(\norm{u_B}^2\) and no restriction on the
width \(b-a\) are needed for the upper bound.
\end{corollary}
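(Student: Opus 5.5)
The corollary follows at once from Theorem~\ref{thm:uniform-count}, taken as stated. The extra window hypothesis \(a\le\norm{u_B}^2\le b\le k\) only adds constraints on the members of \(\mathcal F\). Every \(B\in\mathcal F\) already satisfies (2.2) at budget \(k\), so the hypotheses of Theorem~\ref{thm:uniform-count} hold verbatim for the same \(x_1,\ldots,x_n\), the same \(k\), and the same \(q\). The theorem then gives \(|\mathcal F|\le e^{8k+2q}\) directly.

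The plan has two steps:
\begin{enumerate}
\item Observe that the upper inequality \(\norm{u_B}^2\le b\le k\) implies the budget condition \(\norm{u_B}^2\le k\) in (2.2).
\item Note that feasibility and the sign condition \(\ip{u_B}{x_i}\le0\) for \(i\in B\) are assumed unchanged as part of ``the hypotheses of Theorem~\ref{thm:uniform-count}.'' Then apply the theorem.
\end{enumerate}
The lower bound \(a\) is never used. This is exactly the content of the final sentence of the corollary: the count depends only on the budget \(k\) and the cardinality \(q\), not on any lower cutoff or on the width \(b-a\).

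One optional remark could be added. Since \(b\) is itself a valid budget whenever \(b>0\), one could even apply the theorem at budget \(b\) to get the sharper \(e^{8b+2q}\le e^{8k+2q}\). If \(b\le0\), then every \(u_B=0\), and \(B\mapsto u_B\) is injective on admissible states. So \(|\mathcal F|\le1\) when \(\mathcal F\) is nonempty, which still satisfies the bound. I would mention this only to stress that the window cannot hurt.

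There is no real obstacle here. The only point needing care is to state clearly that the window condition is a restriction, not a relaxation, of admissibility. Monotonicity of the conclusion in the family of allowed states is then immediate.
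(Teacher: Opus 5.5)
Your proposal is correct and matches the paper's one-line proof: the window hypothesis only restricts $\mathcal F$ further, and Theorem~\ref{thm:uniform-count} uses only the upper bound $\norm{u_B}^2\le k$. Your optional remarks are also correct but not needed: for $b>0$ you can rerun the theorem at budget $b$, and for $b\le0$ injectivity of $B\mapsto u_B$ gives $|\mathcal F|\le1$.
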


\begin{proof}
Theorem~\ref{thm:uniform-count} uses only the upper bound
\(\norm{u_B}^2\le k\).
\end{proof}

The deterministic theorem is stronger than the fixed-width squared-norm
window estimate needed in the statistical reduction. Standard background on
convex analysis, probability, and Shannon entropy may be found
in~\cite{R70,B95,CT06}; the specific facts used here are proved in the paper
or collected in Appendix~\ref{sec:convex-appendix}.

\section{From deletion geometry to an entropy bound}
\label{sec:entropy-proof}

Fix the random deletion model of Section~\ref{sec:definitions}. The goal is
to prove
\[
 H(B)\le4\Phi_k(t)+2\E|B|.
\]
The proof recursively reveals deletion bits. At each node of the recursion,
the conditional law is again a deletion law of the same form: coordinates
known to be retained are moved to \(J\), and coordinates known to be deleted
are removed. The only issue is to choose a coordinate whose revealed bit can
be charged to a quantity that decreases down the recursion.

The separator spread \(t\) is the quantity that decreases. The KKT identity
below finds an informative coordinate whenever all remaining deletion
probabilities are at most one half. The exceptional high-probability
coordinates are handled directly by the expected deletion count.
\subsection{An exact accounting identity for separator spread}

The first lemma is the bridge between the geometry and the entropy argument.
The left side is the mean squared spread \(t\) of the random separator. The
right side assigns that spread to the KKT coefficient mass carried by each
coordinate, multiplied by the amount by which the mean separator falls short
of unit score at that coordinate.

\begin{lemma}[KKT--covariance identity and an informative coordinate]
\label{lem:covariance-identity}
Let \(\nu\) be admissible at squared-norm budget \(k\). Define
\[
 \bar\alpha_i:=\E\alpha_i(B),
 \qquad
 m_i:=\ip{\bar u}{x_i}.
\]
Then
\[
 \boxed{
 t=
 \sum_{i\in J\cup V}\bar\alpha_i(1-m_i).}
 \tag{3.1}
\]

Assume in addition that every variable coordinate is genuinely random:
\(0<p_i<1\), where
\[
 p_i:=\Pp(i\in B),
 \qquad
 i\in V.
\]
Set
\[
 a_i:=\E[\ip{U}{x_i}\mid i\notin B],
 \qquad
 b_i:=\E[\ip{U}{x_i}\mid i\in B],
 \qquad
 g_i:=a_i-b_i.
\]
Then
\[
 a_i\ge1,
 \qquad
 b_i\le0,
 \qquad
 g_i\ge1,
\]
and
\[
 \boxed{
 t\le
 \sum_{i\in V}\bar\alpha_i p_i g_i.}
 \tag{3.2}
\]
Consequently, if \(t>0\), some \(i\in V\) satisfies
\[
 \boxed{
 p_i g_i\ge\frac{t}{k}.}
 \tag{3.3}
\]
\end{lemma}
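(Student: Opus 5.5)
The plan is to derive (3.1) by expanding \(t\) through the KKT representation (2.5)--(2.6). Write
\[
t=\E\norm U^2-\norm{\bar u}^2=\E\ip{U}{U-\bar u}.
\]
Substituting \(U=\sum_i\alpha_i(B)x_i\) gives
\[
\ip{U}{U-\bar u}=\sum_i\alpha_i(B)\bigl(\ip{U}{x_i}-\ip{\bar u}{x_i}\bigr).
\]
Complementary slackness in (2.6) says that \(\alpha_i(B)>0\) forces \(\ip{U}{x_i}=1\), so this equals \(\sum_i\alpha_i(B)(1-m_i)\). The index set \(J\cup V\) is finite and \(\supp(\nu)\) is finite, so taking expectations and exchanging the sum with \(\E\) is trivial. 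This yields (3.1).

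For (3.2), I would split the sum in (3.1) into \(J\) and \(V\). For \(i\in J\), every state has \(\ip{U}{x_i}\ge1\), so \(m_i=\E\ip{U}{x_i}\ge1\). Since \(\bar\alpha_i\ge0\), those terms are \(\le0\) and can be dropped.

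For \(i\in V\) with \(0<p_i<1\), I first record \(a_i\ge1\) and \(b_i\le0\) directly from admissibility (2.3). Retained coordinates have score \(\ge1\) and deleted ones have score \(\le0\), and conditional expectations preserve these inequalities, so \(g_i\ge1\). Then I decompose \(m_i=(1-p_i)a_i+p_ib_i=a_i-p_ig_i\), which gives \(1-m_i=(1-a_i)+p_ig_i\le p_ig_i\). Multiplying by \(\bar\alpha_i\ge0\) and summing gives (3.2).

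A subtle point is that \(\bar\alpha_i\) averages over all states, including deleted ones where \(\alpha_i=0\). The argument above only uses \(\bar\alpha_i\ge0\), though, so no conditioning is needed here.

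For (3.3), I would argue by contradiction. Suppose \(t>0\) and \(p_ig_i<t/k\) for every \(i\in V\). Then
\[
t\le\sum_{i\in V}\bar\alpha_ip_ig_i\le\frac tk\sum_{i\in V}\bar\alpha_i\le\frac tk\,\E\sum_{i\in J\cup V}\alpha_i(B)=\frac tk\,\E\norm U^2\le t,
\]
using (2.6). This chain gives only \(\le t\), not a strict inequality, so the contradiction needs more care. I would redo the chain with a strict inequality, which is available because \(t>0\) forces \(\sum_{i\in V}\bar\alpha_i>0\) (otherwise (3.2) would give \(t\le0\)). With some \(\bar\alpha_i>0\), the inequality \(p_ig_i<t/k\) makes the second step strict, so we get \(t<t\), a contradiction.

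The main obstacle is essentially bookkeeping rather than depth. It consists of making sure the KKT certificate identity \(\sum\alpha_i=\norm U^2\) and complementary slackness hold as stated in (2.6), and of handling the strict inequality in the final pigeonhole step as just described.
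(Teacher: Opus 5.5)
Your proof is correct and follows the paper's argument. For (3.1), the paper computes \(\E\norm U^2=\sum_i\bar\alpha_i\) and \(\norm{\bar u}^2=\sum_i\bar\alpha_i m_i\) separately; this is the same calculation as your \(t=\E\ip{U}{U-\bar u}\) combined with statewise complementary slackness. Your derivation of (3.2) is identical to the paper's, and your explicit handling of strictness in (3.3) spells out a step that the paper's one-line contradiction leaves implicit, namely that \(t>0\) forces \(\sum_{i\in V}\bar\alpha_i>0\).
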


\begin{proof}
By Lemma~\ref{lem:kkt},
\[
 \E\norm U^2
 =
 \E\sum_i\alpha_i(B)
 =
 \sum_i\bar\alpha_i.
\]
Also,
\[
 \bar u
 =
 \E U
 =
 \sum_i\bar\alpha_i x_i,
\]
so
\[
 \norm{\bar u}^2
 =
 \sum_i\bar\alpha_i\ip{\bar u}{x_i}
 =
 \sum_i\bar\alpha_i m_i.
\]
Since
\[
 t=\E\norm U^2-\norm{\bar u}^2,
\]
identity (3.1) follows.

For \(i\in J\), every state retains \(i\), hence \(m_i\ge1\), and its
summand in (3.1) is nonpositive. For \(i\in V\),
\[
 m_i=(1-p_i)a_i+p_i b_i,
\]
and therefore
\[
 1-m_i
 =
 1-a_i+p_i(a_i-b_i)
 \le
 p_i g_i.
\]
This proves (3.2). Finally,
\[
 \sum_i\bar\alpha_i
 =
 \E\norm U^2
 \le k.
\]
If every \(p_i g_i<t/k\), then the right-hand side of (3.2) is strictly less
than \(t\), a contradiction. Hence (3.3) holds for at least one coordinate.
\end{proof}

Identity (3.1) may be read as a conservation law. The total separator spread
\(t\) cannot be positive unless some coordinate combines nonnegligible KKT
weight with a deficit in the mean score. After separating retained and
deleted states, inequality (3.3) says that at least one coordinate has a
large product
\[
 \text{deletion probability}\times\text{conditional score jump}.
\]
The next lemma turns that score jump into an actual decrease of conditional
variance when the deletion bit is revealed.

\begin{lemma}[Variance drop from revealing one deletion bit]
\label{lem:variance-split}
Fix \(i\in V\) with \(0<p_i<1\), write \(p=p_i\), and put
\[
 \bar u_0:=\E[U\mid i\notin B],
 \qquad
 \bar u_1:=\E[U\mid i\in B],
\]
\[
 t_0:=
 \E[\norm{U-\bar u_0}^2\mid i\notin B],
 \qquad
 t_1:=
 \E[\norm{U-\bar u_1}^2\mid i\in B].
\]
Then
\[
 \boxed{
 t=(1-p)t_0+pt_1+\Delta_i,}
 \qquad
 \Delta_i:=
 p(1-p)\norm{\bar u_0-\bar u_1}^2,
 \tag{3.4}
\]
and
\[
 \boxed{
 \Delta_i\ge p(1-p)g_i^2.}
 \tag{3.5}
\]
\end{lemma}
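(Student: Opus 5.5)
The plan has two parts: the decomposition (3.4) is the law of total variance for the Hilbert-valued random variable \(U\), and (3.5) follows by projecting the difference of conditional means onto \(x_i\) and using Cauchy--Schwarz.

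\textbf{Step 1: the decomposition (3.4).} Write \(E_0=\{i\notin B\}\) and \(E_1=\{i\in B\}\), with probabilities \(1-p\) and \(p\). Then
\[
 \bar u=(1-p)\bar u_0+p\bar u_1 .
\]
On \(E_0\) expand
\[
 \norm{U-\bar u}^2=\norm{U-\bar u_0}^2+2\ip{U-\bar u_0}{\bar u_0-\bar u}+\norm{\bar u_0-\bar u}^2 .
\]
The cross term has zero conditional expectation because \(\E[U\mid E_0]=\bar u_0\). The analogous expansion holds on \(E_1\). Taking the weighted average gives
\[
 t=(1-p)t_0+pt_1+(1-p)\norm{\bar u_0-\bar u}^2+p\norm{\bar u_1-\bar u}^2 .
\]
Since \(\bar u_0-\bar u=p(\bar u_0-\bar u_1)\) and \(\bar u_1-\bar u=-(1-p)(\bar u_0-\bar u_1)\), the last two terms sum to
\[
 \bigl((1-p)p^2+p(1-p)^2\bigr)\norm{\bar u_0-\bar u_1}^2=p(1-p)\norm{\bar u_0-\bar u_1}^2=\Delta_i .
\]
Integrability is not an issue: \(\nu\) has finite support and \(\norm U^2\le k\), so all expectations are finite sums.

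\textbf{Step 2: the bound (3.5).} Conditional expectation commutes with the bounded linear functional \(\ip{\cdot}{x_i}\), again trivially here since the sums are finite. Hence
\[
 \ip{\bar u_0-\bar u_1}{x_i}=a_i-b_i=g_i .
\]
Because \(\norm{x_i}\le1\), Cauchy--Schwarz gives
\[
 \norm{\bar u_0-\bar u_1}\ge\ip{\bar u_0-\bar u_1}{x_i}=g_i .
\]
Lemma~\ref{lem:covariance-identity} gives \(g_i\ge1>0\), so squaring preserves the inequality. Multiplying by \(p(1-p)\) yields
\[
 \Delta_i\ge p(1-p)g_i^2 .
\]

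\textbf{Main point to check.} No step is a real obstacle. The only places needing care are the identification \(g_i=\ip{\bar u_0-\bar u_1}{x_i}\), where linearity of expectation passes through the inner product, and the use of \(g_i\ge1\), which is supplied by Lemma~\ref{lem:covariance-identity} and makes the squaring step legitimate.
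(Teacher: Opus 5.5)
Your proof is correct and follows the paper's argument: (3.4) is the law of total variance with the conditional cross terms vanishing, and (3.5) comes from \(\ip{\bar u_0-\bar u_1}{x_i}=g_i\) together with Cauchy--Schwarz and \(\norm{x_i}\le1\). The only difference is that you write out the algebra \(\bar u_0-\bar u=p(\bar u_0-\bar u_1)\) and \(\bar u_1-\bar u=-(1-p)(\bar u_0-\bar u_1)\), which the paper leaves implicit.
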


\begin{proof}
Expand \(U-\bar u\) around the appropriate conditional mean and average.
The cross terms vanish conditionally, giving (3.4). Moreover,
\[
 \ip{\bar u_0-\bar u_1}{x_i}
 =
 a_i-b_i
 =
 g_i.
\]
Since \(\norm{x_i}\le1\), Cauchy--Schwarz gives
\[
 \norm{\bar u_0-\bar u_1}\ge g_i,
\]
proving (3.5).
\end{proof}

\subsection{From variance drop to binary-entropy control}

For a split with deletion probability \(p\), the new information is the
binary entropy \(h(p)\). The following scalar inequality shows that, in the
low-deletion-probability regime, the geometric variance drop is large enough
to control that entropy.

\begin{lemma}[Binary entropy controlled by the variance drop]
\label{lem:binary-payment}
Let
\[
 0<p\le\frac12,
 \qquad
 0<A\le1,
 \qquad
 g\ge1,
 \qquad
 pg\ge A,
\]
and suppose
\[
 \Delta\ge p(1-p)g^2.
\]
Then
\[
 \boxed{
 h(p)\le4\Delta\log\frac eA.}
 \tag{3.6}
\]
\end{lemma}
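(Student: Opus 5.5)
The plan is to reduce (3.6) to an elementary scalar inequality in \(p\) and \(g\), then remove \(A\) and \(\Delta\) using only monotonicity. First, since \(\Delta\ge p(1-p)g^2\) and \(1-p\ge\tfrac12\), the right side of (3.6) is at least \(2pg^2\log\frac eA\). Second, the hypotheses \(A\le1\) and \(A\le pg\) give
\[
 \log\frac eA\ \ge\ \max\Bigl\{1,\ \log\frac{e}{pg}\Bigr\}.
\]
Both lower bounds are needed, because \(\log\frac{e}{pg}\) can be small or negative when \(pg\) is large. It therefore suffices to prove
\[
 h(p)\le 2pg^2\max\Bigl\{1,\log\frac{e}{pg}\Bigr\}.
\]

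On the entropy side I will use the standard bound \(h(p)\le p\log\frac ep\), valid for \(0<p\le1\). It follows from \(-(1-p)\log(1-p)\le p\), which holds because \(-\log(1-p)\le\frac{p}{1-p}\). I will then split into two cases according to whether \(pg\) is at least or below \(1\).

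In the case \(pg\ge1\), we have \(g\ge1/p\), so \(2pg^2\ge2/p\ge4\). On the other hand, \(p\log\frac ep\le1\) on \((0,1]\), since \(x\log(e/x)\) is increasing there with value \(1\) at \(x=1\). This case is therefore immediate.

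In the case \(pg<1\), write \(L:=\log\frac{e}{pg}\), so that \(L\ge1\), and note that \(\log\frac ep=L+\log g\). The target becomes \(L+\log g\le 2g^2L\). I will prove it termwise: \(L\le g^2L\) because \(g\ge1\), and \(\log g\le g^2\le g^2L\) because \(L\ge1\). Multiplying by \(p\) then gives \(h(p)\le p\log\frac ep\le2pg^2L\), which is what we need.

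There is no real obstacle here. The only point needing care is the replacement of \(\log\frac eA\) by \(\log\frac{e}{pg}\): this is valid only as a lower bound, and it is useful only when \(pg<1\). That is exactly why the case split on \(pg\) is made, and why \(A\le1\) is used to secure \(\log\frac eA\ge1\) in the complementary case.
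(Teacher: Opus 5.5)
Your proof is correct. It shares the paper's core ingredient \(h(p)\le p\log\frac ep\), but it organizes the argument differently.

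The paper keeps \(A\) and \(\Delta\) in play. It derives the two lower bounds \(\Delta\ge p/2\) and \(\Delta\ge A^2/(2p)\) and splits on \(p\ge A\) versus \(p<A\). In the second case it writes \(p=Ar\) and uses the calculus fact \(r^2\log\frac1r\le\frac1{2e}\).

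You instead remove \(\Delta\) and \(A\) at the outset by monotonicity. This reduces the lemma to the \(A\)-free inequality \(h(p)\le2pg^2\max\{1,\log\frac{e}{pg}\}\). Your split is then on \(pg\ge1\) versus \(pg<1\). Beyond \(h(p)\le p\log\frac ep\), you need only the trivial bound \(h(p)\le1\) and the crude estimate \(\log g\le g^2\).

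The two routes meet at the extremal choice \(A=pg\). There the paper's ratio \(r=p/A\) equals \(1/g\), and its key estimate \(r^2(L+\log\frac1r)\le2L\) is exactly your \(L+\log g\le2g^2L\).

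Your version is slightly more elementary, and it makes explicit that the hypothesis \(pg\ge A\) is used only to lower-bound \(\log\frac eA\). The paper's version stays closer to the form in which \(A=t/k\) and \(\Delta\) enter the entropy induction. The price is an extra case and a small optimization over \(r\).
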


\begin{proof}
For \(0<p<1\),
\[
 -(1-p)\log(1-p)\le p,
\]
because
\[
 -\log(1-p)\le\frac{p}{1-p}.
\]
Hence
\[
 h(p)\le p\log\frac ep.
 \tag{3.7}
\]
Since \(p\le1/2\) and \(g\ge1\),
\[
 \Delta
 \ge
 \frac12pg^2
 \ge
 \frac p2,
 \tag{3.8}
\]
and, using \(pg\ge A\),
\[
 \Delta
 \ge
 \frac{(pg)^2}{2p}
 \ge
 \frac{A^2}{2p}.
 \tag{3.9}
\]
Put
\[
 L:=\log\frac eA\ge1.
\]

If \(p\ge A\), then (3.7)--(3.8) give
\[
 h(p)
 \le
 p\log\frac eA
 \le
 2\Delta L.
\]
If \(p<A\), write \(p=Ar\) with \(0<r<1\). Then
\[
 h(p)
 \le
 Ar\left(L+\log\frac1r\right)
 =
 \frac A r\,r^2\left(L+\log\frac1r\right).
\]
The function \(s\mapsto se^{-2s}\) has maximum \(1/(2e)\) on
\([0,\infty)\). With \(s=\log(1/r)\), this yields
\[
 r^2\log\frac1r\le\frac1{2e}\le L.
\]
Also \(r^2L\le L\), so
\[
 h(p)
 \le
 2\frac A r L
 =
 2\frac{A^2}{p}L
 \le
 4\Delta L
\]
by (3.9). This proves (3.6).
\end{proof}

\subsection{The entropy induction}

There are two kinds of split. If a coordinate is deleted with probability
greater than \(1/2\), then \(h(p)<2p\), so the term \(2\E|B|\) controls the
new binary entropy directly. If all deletion probabilities are at most
\(1/2\), Lemma~\ref{lem:covariance-identity} supplies an informative
coordinate, and the decrease of \(4\Phi_k(t)\) controls the binary entropy.
Conditioning preserves the deletion model because a revealed-retained
coordinate moves into \(J\), whereas a revealed-deleted coordinate is
removed.

\begin{theorem}[Entropy bound for random admissible deletion sets]
\label{thm:entropy}
Let \(\nu\) be any admissible deletion law at squared-norm budget \(k>0\). Then
\[
 \boxed{
 H(B)
 \le
 4\Phi_k(t)+2\E|B|,}
 \qquad
 t=\E\norm{u_B-\E u_B}^2.
 \tag{3.10}
\]
In particular,
\[
 \boxed{
 H(B)\le8k+2\E|B|.}
 \tag{3.11}
\]
\end{theorem}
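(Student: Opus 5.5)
We will prove (3.10) by induction on \(|V|\), revealing one deletion bit at a time and using the chain rule for entropy. Inequality (3.11) then follows from (2.11).

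\textbf{Conditioning and the chain rule.} For \(i\in V\) with \(p=p_i\), condition \(\nu\) on each value of the bit \(\one_{\{i\in B\}}\).
\begin{itemize}
\item On \(\{i\notin B\}\), move \(i\) into \(J\). The conditional law of \(B\) is then a law on \(2^{V\setminus\{i\}}\) with the same separators \(U=u_B\). It is admissible at budget \(k\), because every state in its support was already admissible.
\item On \(\{i\in B\}\), delete \(i\) from \(V\) altogether and put \(B'=B\setminus\{i\}\). The feasible regions are unchanged, since \(i\) was not a retained constraint. The conditions \(\ip{u_B}{x_j}\le 0\) for \(j\in B'\) and \(\norm{u_B}^2\le k\) are inherited, so this conditional law is also admissible.
\end{itemize}
The chain rule gives
\[
H(B)=h(p)+(1-p)H_0+pH_1,
\qquad
\E|B|=(1-p)E_0+p(1+E_1),
\]
where \(H_0,E_0\) and \(H_1,E_1\) are the entropy and expected deletion count of the two conditional laws. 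Let \(t_0,t_1\) be as in Lemma~\ref{lem:variance-split}. The induction hypothesis applied to both conditional laws gives
\[
H(B)\le h(p)+4\bigl[(1-p)\Phi_k(t_0)+p\Phi_k(t_1)\bigr]+2\E|B|-2p.
\]
By concavity and monotonicity of \(\Phi_k\), together with (3.4),
\[
(1-p)\Phi_k(t_0)+p\Phi_k(t_1)\le\Phi_k\bigl((1-p)t_0+pt_1\bigr)=\Phi_k(t-\Delta_i).
\]
It therefore suffices to choose \(i\) so that
\[
h(p)\le 4\bigl[\Phi_k(t)-\Phi_k(t-\Delta_i)\bigr]+2p.
\]

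\textbf{Base case and degenerate coordinates.} If \(V=\varnothing\), then \(H=0\) and the right side of (3.10) is nonnegative. If some \(p_i\in\{0,1\}\), the bit is deterministic and \(h(p)=0\). Since \(\Phi_k(t-\Delta_i)\le\Phi_k(t)\), the displayed requirement holds trivially, so the same reduction applies.

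\textbf{Case 1: some \(p_i>1/2\).} Use \(\Phi_k(t-\Delta_i)\le\Phi_k(t)\). Then \(h(p)\le\log 2<1<2p\), which suffices.

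\textbf{Case 2: all \(p_i\in(0,1/2]\), with \(V\neq\varnothing\).}
\begin{enumerate}
\item By Lemma~\ref{lem:variance-split}, \(t\ge\Delta_i\ge p_i(1-p_i)g_i^2>0\), so \(t>0\).
\item Lemma~\ref{lem:covariance-identity} supplies \(i\) with \(p_ig_i\ge A:=t/k\in(0,1]\).
\item Since \(\Phi_k\) is concave, the tangent-line bound and (2.10) give
\[
\Phi_k(t)-\Phi_k(t-\Delta_i)\ge\Delta_i\Phi_k'(t)=\Delta_i\log\frac{ek}{t}=\Delta_i\log\frac eA.
\]
If \(t-\Delta_i=0\), the same bound holds by continuity of \(\Phi_k\) at \(0\).
\item Lemma~\ref{lem:binary-payment}, with \(g=g_i\) and \(\Delta=\Delta_i\) from (3.5), gives \(h(p)\le4\Delta_i\log(e/A)\). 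This closes the induction, even without the spare \(2p\).
\end{enumerate}

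The main point needing care is the bookkeeping: the two conditional laws must again be admissible deletion laws of the same \((J,V)\)-form at the same budget \(k\). We must also check that deleting a revealed-deleted coordinate correctly accounts for the \(+1\) in \(\E|B|\), which is exactly what produces the \(-2p\) slack. The analytic heart, the tangent-line step paired with Lemma~\ref{lem:binary-payment}, is then routine.
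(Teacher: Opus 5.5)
Your proposal is correct and follows the paper's proof essentially step for step. Like the paper, you induct on \(|V|\) using the binary chain rule, bound \(h(p)<2p\) directly when some \(p_i>1/2\), and otherwise pair the informative coordinate from Lemma~\ref{lem:covariance-identity} with the tangent-line bound and Lemma~\ref{lem:binary-payment}. The only deviations are cosmetic and both valid: you treat deterministic coordinates as trivial splits rather than removing them first, and you obtain \(t>0\) from \(t\ge\Delta_i\ge p_i(1-p_i)g_i^2\) rather than from injectivity of \(B\mapsto u_B\).
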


\begin{proof}
We argue by induction on \(|V|\). The assertion is immediate when \(V\) is
empty.

First remove deterministic coordinates. If \(p_i=0\), then \(i\) is always
retained; move it from \(V\) to \(J\). This changes neither \(B\), \(U\),
\(t\), nor \(\E|B|\). If \(p_i=1\), remove \(i\) from \(V\) and replace
\(B\) by \(B\setminus\{i\}\). Entropy and \(t\) are unchanged, while the
expected cardinality drops by one, so the induction hypothesis for the
reduced law is stronger than the desired conclusion. We may therefore
assume
\[
 0<p_i<1\qquad(i\in V).
 \tag{3.12}
\]

The map \(B\mapsto u_B\) is injective on the support of \(\nu\). Indeed, if
\(B\ne D\), choose \(i\in B\setminus D\), after interchanging the two sets
if necessary. State \(B\) deletes \(i\), whereas state \(D\) retains it, and
hence
\[
 \ip{u_B}{x_i}\le0,
 \qquad
 \ip{u_D}{x_i}\ge1.
\]
Thus \(u_B\ne u_D\). Consequently, if \(\nu\) is not a point mass, then
\(t>0\). The point-mass case has \(H(B)=0\) and is finished, so assume
\(t>0\).

Fix a coordinate \(i\in V\), and reveal the deletion bit
\[
 \xi:=\one_{\{i\in B\}},
 \qquad
 p:=\Pp(\xi=1),
 \qquad
 B':=B\setminus\{i\}.
\]
Conditionally on \(\xi=0\), the coordinate \(i\) becomes permanently retained;
conditionally on \(\xi=1\), it becomes permanently deleted and may be removed
from the ground set. Therefore both conditional laws are admissible at the
same squared-norm budget \(k\) on a variable set of size \(|V|-1\). Let \(t_0,t_1\) be
their separator variances as in Lemma~\ref{lem:variance-split}.

The map \(B\leftrightarrow(\xi,B')\) is one-to-one. Expanding the definition of
Shannon entropy gives the binary chain rule
\[
 H(B)
 =
 h(p)
 +(1-p)H(B'\mid \xi=0)
 +pH(B'\mid \xi=1).
 \tag{3.13}
\]
Also,
\[
 (1-p)\E[|B'|\mid \xi=0]
 +p\E[|B'|\mid \xi=1]
 =
 \E|B|-p.
 \tag{3.14}
\]
Applying the induction hypothesis in the two children and using
(3.13)--(3.14) yields
\[
 \begin{aligned}
 H(B)
 &\le
 h(p)
 +4\left(
      (1-p)\Phi_k(t_0)
      +p\Phi_k(t_1)
   \right)
 +2(\E|B|-p).
 \end{aligned}
 \tag{3.15}
\]
We now choose the splitting coordinate.

Suppose first that some coordinate has \(p>1/2\), and split on any such
coordinate. Since \(\Phi_k\) is concave and increasing, and since
Lemma~\ref{lem:variance-split} gives
\[
 (1-p)t_0+pt_1=t-\Delta_i\le t,
\]
we have
\[
 (1-p)\Phi_k(t_0)+p\Phi_k(t_1)
 \le
 \Phi_k(t-\Delta_i)
 \le
 \Phi_k(t).
 \tag{3.16}
\]
Moreover,
\[
 h(p)\le\log2<1<2p.
 \tag{3.17}
\]
Substitution into (3.15) proves (3.10).

It remains to consider the case
\[
 p_i\le\frac12\qquad(i\in V).
 \tag{3.18}
\]
By Lemma~\ref{lem:covariance-identity}, choose \(i\) so that
\[
 p_i g_i\ge\frac tk.
\]
Set
\[
 A:=\frac tk\in(0,1].
\]
Lemmas~\ref{lem:variance-split} and~\ref{lem:binary-payment} give
\[
 h(p_i)
 \le
 4\Delta_i\log\frac{ek}{t}.
 \tag{3.19}
\]
Concavity of \(\Phi_k\) and (3.4) imply
\[
 (1-p_i)\Phi_k(t_0)+p_i\Phi_k(t_1)
 \le
 \Phi_k(t-\Delta_i).
 \tag{3.20}
\]
Because a differentiable concave function lies below each of its tangent
lines,
\[
 \Phi_k(t)-\Phi_k(t-\Delta_i)
 \ge
 \Delta_i\Phi_k'(t)
 =
 \Delta_i\log\frac{ek}{t}.
 \tag{3.21}
\]
Combining (3.19)--(3.21),
\[
 h(p_i)
 +4\left(
      (1-p_i)\Phi_k(t_0)
      +p_i\Phi_k(t_1)
   \right)
 \le
 4\Phi_k(t).
\]
Equation (3.15) now proves (3.10), completing the induction.

Finally,
\[
 t\le\E\norm U^2\le k
\]
and \(\Phi_k(t)\le2k\), so (3.11) follows.
\end{proof}

The theorem has a direct coding interpretation. The random set \(B\) carries
\(H(B)\) nats of information. The possible movement of the corresponding
minimum-norm separator contributes at most \(8k\) nats, while the expected
number of deleted coordinates contributes at most \(2\E|B|\) nats.

\begin{proof}[Proof of Theorem~\ref{thm:uniform-count}]
If \(\mathcal F=\varnothing\), the conclusion is immediate. Otherwise apply
Theorem~\ref{thm:entropy} with \(J=\varnothing\), \(V=[n]\), and \(B\) uniform
on \(\mathcal F\). Then
\[
 H(B)=\log|\mathcal F|,
 \qquad
 \E|B|=q.
\]
The second bound in Theorem~\ref{thm:entropy} gives
\[
 \log|\mathcal F|\le8k+2q.
\]
\end{proof}

\begin{remark}[Why dependence on \(k+q\) is unavoidable]
Let \(n=K+q\), where \(K,q\) are positive integers, and take
\[
 x_i=e_i\in\mathbb R^n.
\]
For every \(B\in\binom{[n]}q\),
\[
 u_B=\sum_{i\notin B}e_i,
 \qquad
 \norm{u_B}^2=K,
 \qquad
 \ip{u_B}{e_i}=0\quad(i\in B).
\]
Thus all \(\binom{K+q}{q}\) deletion states are admissible. In the balanced
case \(K=q=s\),
\[
 \binom{2s}{s}\ge\frac{2^{2s}}{2s+1},
\]
because the largest of the \(2s+1\) binomial coefficients in
\[
 \sum_{j=0}^{2s}\binom{2s}{j}=2^{2s}
\]
is the central one. Hence an exponent of order \(K+q\) is necessary in
general.
\end{remark}

\section{From deletion counting to the SVM risk bound}
\label{sec:statistical-reduction}

We now derive Theorem~\ref{thm:main-svm}. There are two population-error
terms. The in-radius term \(R_m^{\le}\) is controlled by the deletion-family
theorem through a ghost sample. The radius tail \(\rho_m\) is controlled
directly by an order-statistic argument.

\subsection{Margin normalization and the population radius tail}

\begin{lemma}[Margin--norm identity]
\label{lem:margin-norm}
On \(\{\gamma_m>0\}\),
\[
 \boxed{
 \norm{u_m}=\frac1{\gamma_m},
 \qquad
 K_m=r_m^2\norm{u_m}^2
 =\frac{r_m^2}{\gamma_m^2},
 \qquad
 K_m\ge1.}
 \tag{4.1}
\]
\end{lemma}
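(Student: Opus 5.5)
The plan is to prove the three assertions by homogeneity (scaling) of the unit-margin program, using the duality between maximizing the margin over unit directions and minimizing the norm subject to unit-margin constraints. Throughout, work on \(\{\gamma_m>0\}\). First note that the supremum defining \(\gamma_m\) is attained. The function \(w\mapsto\min_i\ip{w}{Z_i}\) is concave and continuous, and only the finite-dimensional span of \(Z_1,\ldots,Z_m\) matters, since projecting \(w\) onto that span does not change the scores and does not increase the norm. Attainment then follows by compactness of the unit sphere of the span, or of the unit ball, the value being positive. Let \(w^*\) be a maximizer with \(\norm{w^*}=1\).

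\emph{Upper bound on \(\norm{u_m}\).} Put \(u:=w^*/\gamma_m\). Then \(\ip{u}{Z_i}\ge1\) for all \(i\), so \(u\) is feasible. Minimality of \(u_m\) gives \(\norm{u_m}\le\norm{u}=1/\gamma_m\).

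\emph{Lower bound on \(\norm{u_m}\).} Feasibility of \(u_m\) forces \(u_m\ne0\), because otherwise all scores would vanish and could not be at least \(1\). Set \(w:=u_m/\norm{u_m}\). This is a unit vector with \(\min_i\ip{w}{Z_i}\ge1/\norm{u_m}\). Hence \(\gamma_m\ge1/\norm{u_m}\), that is, \(\norm{u_m}\ge1/\gamma_m\). Combining the two bounds gives \(\norm{u_m}=1/\gamma_m\), and therefore \(K_m=r_m^2/\gamma_m^2\).

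\emph{The bound \(K_m\ge1\).} Take any index \(i\). By Cauchy--Schwarz, using \(\norm{Z_i}=\norm{X_i}\le r_m\),
\[
1\le\ip{u_m}{Z_i}\le\norm{u_m}\,r_m .
\]
Squaring gives \(K_m\ge1\). Equivalently, \(\gamma_m\le\ip{w^*}{Z_i}\le r_m\).

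The only mildly delicate point is attainment of the supremum in the upper bound on \(\norm{u_m}\). This can be avoided entirely: for any \(\varepsilon>0\), pick a unit \(w\) with \(\min_i\ip{w}{Z_i}\ge\gamma_m-\varepsilon>0\). Then \(w/(\gamma_m-\varepsilon)\) is feasible, so \(\norm{u_m}\le1/(\gamma_m-\varepsilon)\), and letting \(\varepsilon\to0\) gives the same bound without any compactness argument. Existence and uniqueness of \(u_m\) itself is taken from the appendix: a closed convex nonempty set has a unique minimum-norm point.
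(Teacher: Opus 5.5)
Your proof is correct and follows essentially the paper's argument: the lower bound via \(u_m/\norm{u_m}\) and the Cauchy--Schwarz step giving \(K_m\ge1\) are the same as in the paper. Your \(\varepsilon\)-variant of the upper bound is also the paper's route, which shows \(\min_i\ip{w}{Z_i}\le1/\norm{u_m}\) for every unit \(w\) with positive minimum score and then takes the supremum, so the attainment-by-compactness detour is not needed.
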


\begin{proof}
If \(u\) is feasible, then \(w=u/\norm u\) is a unit vector satisfying
\[
 \min_i\ip{w}{Z_i}\ge\frac1{\norm u}.
\]
In particular,
\[
 \gamma_m\ge\frac1{\norm{u_m}}.
\]
Conversely, for any unit vector \(w\), put
\[
 s:=\min_i\ip{w}{Z_i}.
\]
If \(s>0\), then \(w/s\) is feasible, so minimality of \(u_m\) gives
\[
 \norm{u_m}\le\frac1s,
\]
or \(s\le1/\norm{u_m}\). The same inequality is trivial when \(s\le0\).
Taking the supremum over \(w\) proves
\[
 \gamma_m=\frac1{\norm{u_m}}.
\]

Finally, for every training index,
\[
 1\le\ip{u_m}{Z_i}\le\norm{u_m}\,r_m.
\]
Thus \(r_m\norm{u_m}\ge1\), which is equivalent to \(K_m\ge1\).
\end{proof}

Recall from Section~\ref{sec:definitions} that
\(\Risk(u_m)\le R_m^{\le}+\rho_m\). The next lemma controls the second term.
It uses only that the training radii are independent draws from the
population radius distribution.

\begin{lemma}[Population mass beyond the empirical maximum]
\label{lem:radius-tail}
For every \(t\in[0,1)\),
\[
 \boxed{
 \Pp(\rho_m>t)
 \le
 (1-t)^m
 \le
 e^{-mt}.}
 \tag{4.2}
\]
No continuity or atomlessness assumption is made on the distribution of
\(\norm X\).
\end{lemma}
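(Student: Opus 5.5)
The plan is to reduce the statement to one i.i.d. real random variable, the radius \(R:=\norm X\) with \((X,Y)\sim P\), and to use a generalized quantile of \(R\) so that atoms in the law of \(R\) cause no trouble. Write \(G(s):=\Pp(R>s)\). This is nonincreasing and right-continuous in \(s\). Then \(\rho_m=G(r_m)\), where \(r_m=\max_i R_i\) and \(R_1,\ldots,R_m\) are i.i.d.\ copies of \(R\). The case \(t=0\) is trivial, since the bound reads \(\Pp(\rho_m>0)\le1\). So fix \(t\in(0,1)\).

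The key step is to define the threshold
\[
 s_t:=\inf\{s\ge 0: G(s)\le t\}.
\]
This set is nonempty because \(G(s)\to0\) as \(s\to\infty\). By right-continuity of \(G\) we get \(G(s_t)\le t\). Since \(G\) is nonincreasing, the event \(\{\rho_m>t\}=\{G(r_m)>t\}\) forces \(r_m<s_t\). Indeed, if \(r_m\ge s_t\) then \(G(r_m)\le G(s_t)\le t\). Hence
\[
 \{\rho_m>t\}\subseteq\{R_i<s_t\ \text{for all } i\}.
\]

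Next we bound \(\Pp(R<s_t)\). For every \(s<s_t\) we have \(G(s)>t\) by the definition of the infimum, so \(\Pp(R\le s)<1-t\). Taking the increasing union over \(s\uparrow s_t\) gives \(\Pp(R<s_t)\le1-t\). By independence,
\[
 \Pp(\rho_m>t)\le\Pp(R<s_t)^m\le(1-t)^m\le e^{-mt},
\]
where the last step uses \(1-t\le e^{-t}\). The edge case \(s_t=0\) makes \(\{R<0\}\) empty, and the bound holds trivially.

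The only delicate point is the treatment of atoms. The proof must use the strict inequality \(R<s_t\) together with the left-limit argument, rather than assuming \(G\) is continuous. That is exactly what the choice of \(s_t\) through right-continuity handles. Measurability of \(\rho_m=G(r_m)\) is immediate, because \(G\) is monotone and \(r_m\) is a Borel function of the sample.
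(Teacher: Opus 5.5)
Your proof is correct and is essentially the paper's argument. Your threshold \(s_t=\inf\{s\ge0:G(s)\le t\}\) is exactly the paper's quantile \(c_t=\inf\{s\ge0:F(s)\ge1-t\}\), written with the survival function \(G=1-F\), and you use the same two facts (right-continuity to force \(r_m<s_t\) on \(\{\rho_m>t\}\), and the left-limit bound \(\Pp(R<s_t)\le1-t\) to handle atoms) before concluding by independence and \(1-t\le e^{-t}\).
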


\begin{proof}
Let
\[
 R:=\norm X,
 \qquad
 R_i:=\norm{X_i},
 \qquad
 F(s):=P(R\le s).
\]
The empirical radius is \(r_m=\max_{i\le m}R_i\), but the quantity
\[
 \rho_m=P(R>r_m)=1-F(r_m)
\]
is a population tail probability. Thus no approximation of \(P\) by the
empirical distribution is being assumed.

The assertion is trivial for \(t=0\), so fix \(0<t<1\) and define the
\((1-t)\)-quantile
\[
 c_t:=\inf\{s\ge0:F(s)\ge1-t\}.
\]
Since \(F(s)\to1\) as \(s\to\infty\), \(c_t<\infty\). Right-continuity gives
\(F(c_t)\ge1-t\), while the definition of the infimum gives
\[
 F(c_t-)=P(R<c_t)\le1-t.
\]
If \(\rho_m>t\), then \(F(r_m)<1-t\), and therefore \(r_m<c_t\). Hence every
training radius is strictly below \(c_t\), so independence gives
\[
 \Pp(\rho_m>t)
 \le
 \Pp(r_m<c_t)
 =
 \Pp(R_1<c_t,\ldots,R_m<c_t)
 =
 F(c_t-)^m
 \le
 (1-t)^m.
\]
The strict inequality \(R_i<c_t\) and the left limit \(F(c_t-)\) are what
make the argument valid in the presence of atoms. Finally,
\(1-t\le e^{-t}\).
\end{proof}

\subsection{The exact factorial ghost identity}

Use the pooled-sample notation of Section~\ref{sec:definitions}. The
designated set
\[
 B_0:=\{m+1,\ldots,m+\ell\}
\]
consists of \(\ell\) independent test observations placed after the first
\(m\) training observations. Conditional on the training sample, the event
that all of them are in-radius errors has probability
\((R_m^{\le})^\ell\). Exchangeability then permits averaging over every
\(\ell\)-subset of the pooled sample.

\begin{lemma}[Exact factorial ghost identity]
\label{lem:ghost}
For every \(\kappa\ge1\) and every integer \(\ell\ge1\),
\[
 \boxed{
 \E\left[
   \one_{\{\gamma_m>0,\ K_m\le \kappa\}}
   (R_m^{\le})^\ell
 \right]
 =
 \binom{m+\ell}{\ell}^{-1}
 \E\sum_{B\in\binom{[m+\ell]}\ell}I_B^{(\kappa)}.}
 \tag{4.3}
\]
\end{lemma}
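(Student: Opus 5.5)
The argument has two steps: identify the left side with the expectation of \(I_{B_0}^{(\kappa)}\) for the designated set \(B_0=\{m+1,\ldots,m+\ell\}\), and then use exchangeability of the pooled sample to replace \(B_0\) by an average over all \(\ell\)-subsets.

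\emph{Step 1: the left side equals \(\E I_{B_0}^{(\kappa)}\).} Take \(B=B_0\). Then \(A_{B_0}=[m]\) is exactly the training sample. Under this identification:
\begin{itemize}
\item feasibility of \(A_{B_0}\) is the event \(\{\gamma_m>0\}\);
\item \(\widehat u_{B_0}=u_m\) and \(r_{B_0}=r_m\), so \(\widehat K_{B_0}=K_m\).
\end{itemize}
The first fact needs a short argument via Lemma~\ref{lem:margin-norm}. If \(\gamma_m>0\), a near-optimal unit vector \(w\) with \(\min_i\ip{w}{Z_i}>0\) rescales to a feasible point. Conversely, a feasible \(u\) gives \(\gamma_m\ge1/\norm u>0\).

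Hence
\[
 I_{B_0}^{(\kappa)}(W)=\one_{\{\gamma_m>0,\,K_m\le\kappa\}}\prod_{j=m+1}^{m+\ell}\one_{\{\norm{X_j}\le r_m,\ Y_j\ip{u_m}{X_j}\le0\}}.
\]
Condition on the training sample \(S_m=((X_i,Y_i))_{i\le m}\). The first factor and \(u_m,r_m\) are \(S_m\)-measurable. The test pairs \((X_j,Y_j)\), \(j>m\), are i.i.d.\ \(P\) and independent of \(S_m\). So the conditional expectation of the product is \((R_m^{\le})^\ell\), by the definition of \(R_m^{\le}\) as the \(P\)-mass of the in-radius error set.

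This step needs Fubini together with joint measurability of \((S_m,x,y)\mapsto\one\{\ldots\}\). I take that from the Borel versions of \(u_m\) and \(r_m\) in Appendix~\ref{sec:convex-appendix}. Taking expectations gives
\[
 \E\bigl[\one_{\{\gamma_m>0,\ K_m\le\kappa\}}(R_m^{\le})^\ell\bigr]=\E I_{B_0}^{(\kappa)}.
\]

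\emph{Step 2: exchangeability.} For any \(B\in\binom{[m+\ell]}\ell\), choose a permutation \(\pi\) of \([N]\) that maps \(B_0\) onto \(B\) and \([m]\) onto \(A_B\). The indicator in (2.12) is defined by the same rule from the retained and held-out subsamples. That rule is symmetric in the order of the retained points: the argmin, the max radius and feasibility do not depend on ordering, and the held-out conditions are per-point. Therefore
\[
 I_B^{(\kappa)}(W)=I_{B_0}^{(\kappa)}(W\circ\pi).
\]
Since \(W\sim P^N\) is invariant under coordinate permutations, \(\E I_B^{(\kappa)}=\E I_{B_0}^{(\kappa)}\) for every \(B\). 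Averaging over the \(\binom{m+\ell}{\ell}\) subsets and using linearity of expectation gives (4.3).

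\emph{Main obstacle.} The only delicate point is measurability: \(I_B^{(\kappa)}\) must be a Borel function of \(W\), and the conditional computation of \((R_m^{\le})^\ell\) must be valid. Both reduce to Borel measurability of the map from the sample to \((\gamma_m,u_m,r_m)\). Here I would use the reduction to the separable subspace \(\Hh_0\) and the appendix facts, and treat the identification of feasibility with \(\{\gamma_m>0\}\) as done in Step 1.
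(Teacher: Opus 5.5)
Your proposal is correct and follows essentially the same route as the paper. You condition on \(S_m\) to identify the left side with \(\E I_{B_0}^{(\kappa)}\), then use permutation invariance of \(P^{N}\) to replace \(B_0\) by an average over all \(\binom{m+\ell}{\ell}\) subsets, and your explicit checks that feasibility of \(A_{B_0}\) coincides with \(\{\gamma_m>0\}\) and that \(I_B^{(\kappa)}(W)=I_{B_0}^{(\kappa)}(W\circ\pi)\) fill in details the paper leaves implicit.
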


\begin{proof}
Conditional on \(S_m\), the \(\ell\) designated ghost
observations are independent, and each satisfies
\[
 \norm X\le r_m,
 \qquad
 Y\ip{u_m}{X}\le0
\]
with conditional probability \(R_m^{\le}\). Therefore
\[
 \E[I_{B_0}^{(\kappa)}\mid S_m]
 =
 \one_{\{\gamma_m>0,\ K_m\le \kappa\}}
 (R_m^{\le})^\ell.
\]
Taking expectations gives the left-hand side of (4.3). By exchangeability of
the pooled sample, every \(\ell\)-subset \(B\) has the same expected
indicator as \(B_0\). Therefore the expectation for one designated ghost
set equals the average expected count over all
\(\binom{m+\ell}{\ell}\) possible ghost sets, which proves the identity.
\end{proof}

\subsection{A common normalization turns ghost sets into a deletion family}

The in-radius condition in \(I_B^{(\kappa)}\) has a second role beyond the
risk split. For every counted \(B\), it forces the largest retained radius to
equal the largest radius in the entire pooled sample. Consequently all
candidate deletion sets use the same scaling. After this common
normalization, the sets counted by \(I_B^{(\kappa)}\) form exactly an
admissible deterministic deletion family with \(q=\ell\) and squared-norm
budget \(k=\kappa\).

\begin{proposition}[Uniform factorial-moment bound]
\label{prop:moment}
For every \(\kappa\ge1\) and every integer \(\ell\ge1\),
\[
 \boxed{
 \E\left[
   \one_{\{\gamma_m>0,\ K_m\le \kappa\}}
   (R_m^{\le})^\ell
 \right]
 \le
 \frac{e^{8\kappa+2\ell}}{\binom{m+\ell}{\ell}}.}
 \tag{4.4}
\]
\end{proposition}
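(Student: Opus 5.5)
The plan is to combine the exact ghost identity of Lemma~\ref{lem:ghost} with a pointwise (deterministic) bound on the count. By (4.3) it suffices to show that for every realization of the pooled sample \(W\),
\[
 \sum_{B\in\binom{[N]}\ell}I_B^{(\kappa)}(W)\le e^{8\kappa+2\ell}.
\]
Taking expectations and dividing by \(\binom{m+\ell}{\ell}\) then gives (4.4). So fix \(W\) and let \(\mathcal F\subseteq\binom{[N]}\ell\) be the family of \(B\) with \(I_B^{(\kappa)}(W)=1\). If \(\mathcal F\) is empty there is nothing to prove.

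\textbf{Common normalization.} For \(B\in\mathcal F\), every held-out point satisfies \(\norm{X_i}\le r_B\). Hence \(r_B=R:=\max_{i\le N}\norm{X_i}\), which is the same for all \(B\in\mathcal F\). I also need \(R>0\). If \(R=0\), all points are zero and no constraint \(Y_i\ip{u}{X_i}\ge1\) is feasible. Since \(A_B\) has \(m\ge1\) elements, feasibility of \(A_B\) then fails, contradicting \(\mathcal F\neq\varnothing\). So define the deterministic vectors
\[
 x_i:=Y_iX_i/R\in\mathbb B_{\Hh},\qquad i\in[N].
\]
Then \(u\) satisfies \(Y_i\ip{u}{X_i}\ge1\) exactly when \(u/R\) satisfies \(\ip{u/R}{x_i}\ge1\). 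By uniqueness of the minimum-norm point, the deletion-model separator is therefore \(u_B=R\,\widehat u_B\).

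\textbf{Admissibility.} For each \(B\in\mathcal F\) I check (2.2) with \(n=N\), \(q=\ell\) and \(k=\kappa\):
\begin{itemize}
 \item \(\mathcal C_B\ne\varnothing\), by feasibility of \(A_B\);
 \item \(\norm{u_B}^2=R^2\norm{\widehat u_B}^2=r_B^2\norm{\widehat u_B}^2=\widehat K_B\le\kappa\);
 \item for \(i\in B\), \(\ip{u_B}{x_i}=Y_i\ip{\widehat u_B}{X_i}\le0\).
\end{itemize}
Theorem~\ref{thm:uniform-count} then gives \(|\mathcal F|\le e^{8\kappa+2\ell}\), which is the required pointwise bound.

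\textbf{Main obstacle.} The only delicate point is the common normalization: the radii \(r_B\) must agree across \(B\in\mathcal F\) so that a single fixed vector set \(x_1,\ldots,x_N\) serves every deletion state. This is exactly what the in-radius clause of \(I_B^{(\kappa)}\) guarantees, together with the degenerate case \(R=0\) handled above. Measurability of the count is inherited from the Borel versions in the appendix, which are already used in Lemma~\ref{lem:ghost}.
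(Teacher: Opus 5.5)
Your proposal is correct and follows the paper's proof essentially verbatim. You bound the count pointwise via the common normalization \(r_B=R\) and Theorem~\ref{thm:uniform-count}, then insert that bound into the ghost identity (4.3). Your explicit check that \(R>0\) (using \(|A_B|=m\ge1\)) fills in a step the paper only asserts.

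One small slip: the scaling equivalence should say that \(u\) satisfies \(Y_i\ip{u}{X_i}\ge1\) exactly when \(Ru\), not \(u/R\), satisfies \(\ip{Ru}{x_i}\ge1\), since \(\ip{Ru}{x_i}=Y_i\ip{u}{X_i}\). Your conclusion \(u_B=R\,\widehat u_B\) is unaffected.
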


\begin{proof}
Fix a realization of the pooled sample and let
\[
 \mathcal F_W
 :=
 \left\{
 B\in\binom{[N]}\ell:
 I_B^{(\kappa)}(W)=1
 \right\}.
\]
If \(\mathcal F_W\) is empty there is nothing to prove. Otherwise put
\[
 R:=\max_{1\le i\le N}\norm{X_i}.
\]
For every \(B\in\mathcal F_W\), definition~\eqref{eq:ghost-indicator} says that all deleted
radii are at most the retained maximum \(r_B\). Hence the full-pool maximum is
exactly \(r_B\), so
\[
 r_B=R>0
 \qquad(B\in\mathcal F_W).
\]
Define normalized signed vectors and normalized separators by
\[
 x_i:=\frac{Y_iX_i}{R},
 \qquad
 v_B:=R\widehat u_B.
\]
Then \(\norm{x_i}\le1\), and \(v_B\) is the exact minimum-norm separator for
the retained normalized constraints. Moreover,
\[
 \norm{v_B}^2
 =
 R^2\norm{\widehat u_B}^2
 =
 \widehat K_B
 \le \kappa,
\]
and, for every \(i\in B\),
\[
 \ip{v_B}{x_i}
 =
 Y_i\ip{\widehat u_B}{X_i}
 \le0.
\]
Theorem~\ref{thm:uniform-count}, applied with \(q=\ell\) and
\(k=\kappa\), therefore gives
\[
 |\mathcal F_W|\le e^{8\kappa+2\ell}.
\]
Insert this pointwise estimate into Lemma~\ref{lem:ghost}.
\end{proof}

\begin{corollary}[One-layer tail bound]
\label{cor:layer}
Let \(\kappa\ge1\), \(\eta\in(0,1)\), and
\[
 \ell:=
 \left\lceil8\kappa+\log\frac1\eta\right\rceil.
\]
Then
\[
 \boxed{
 \Pp\left(
   \gamma_m>0,\ K_m\le \kappa,\quad
   R_m^{\le}>e^3\frac\ell m
 \right)
 \le\eta.}
 \tag{4.5}
\]
\end{corollary}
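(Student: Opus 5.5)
The plan is a Markov inequality applied to the \(\ell\)-th moment, with Proposition~\ref{prop:moment} supplying the moment bound. Set \(\tau:=e^3\ell/m\) and
\[
 E:=\{\gamma_m>0,\ K_m\le\kappa,\ R_m^{\le}>\tau\}.
\]
If \(\tau\ge1\), the event \(E\) is empty because \(R_m^{\le}\le1\), so the bound is trivial. Otherwise \(\tau<1\), and on \(E\) we have \((R_m^{\le})^\ell>\tau^\ell\). Hence
\[
 \Pp(E)\le\tau^{-\ell}\,
 \E\left[\one_{\{\gamma_m>0,\ K_m\le\kappa\}}(R_m^{\le})^\ell\right]
 \le\frac{e^{8\kappa+2\ell}}{\tau^\ell\binom{m+\ell}{\ell}}
\]
by (4.4).

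The next step is to lower-bound the binomial coefficient. I will use
\[
 \binom{m+\ell}{\ell}\ge\left(\frac{m+\ell}{\ell}\right)^\ell\ge\left(\frac m\ell\right)^\ell .
\]
Then
\[
 \tau^\ell\binom{m+\ell}{\ell}\ge\left(\frac{e^3\ell}{m}\right)^\ell\left(\frac m\ell\right)^\ell=e^{3\ell},
\]
so \(\Pp(E)\le e^{8\kappa+2\ell-3\ell}=e^{8\kappa-\ell}\).

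Finally, the choice \(\ell=\lceil 8\kappa+\log(1/\eta)\rceil\ge 8\kappa+\log(1/\eta)\) gives \(e^{8\kappa-\ell}\le\eta\). This choice also makes \(\ell\ge1\), since \(\kappa\ge1\), so Proposition~\ref{prop:moment} applies.

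The only point needing care is the case split on \(\tau\ge 1\) versus \(\tau<1\), together with checking that \(\ell\) is a legitimate positive integer for Proposition~\ref{prop:moment}. The remaining steps are routine algebra, and no step is a serious obstacle.
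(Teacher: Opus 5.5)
Your proposal is correct and follows the paper's proof essentially step for step: Markov's inequality on the \(\ell\)-th moment, Proposition~\ref{prop:moment}, the bound \(\binom{m+\ell}{\ell}\ge(m/\ell)^\ell\), and \(e^{8\kappa-\ell}\le\eta\). The case split on \(\tau\ge1\) is harmless but unnecessary, since Markov's inequality holds for any \(\tau>0\).
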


\begin{proof}
The elementary product formula gives
\[
 \binom{m+\ell}{\ell}
 =
 \prod_{j=1}^{\ell}\frac{m+j}{j}
 \ge
 \left(\frac m\ell\right)^\ell.
\]
By Proposition~\ref{prop:moment},
\[
 \E\left[
   \one_{\{\gamma_m>0,K_m\le \kappa\}}
   (R_m^{\le})^\ell
 \right]
 \le
 e^{8\kappa+2\ell}
 \left(\frac\ell m\right)^\ell.
\]
On the event in (4.5),
\[
 (R_m^{\le})^\ell
 >
 e^{3\ell}\left(\frac\ell m\right)^\ell.
\]
Hence the pointwise inequality
\[
 \one_{\{R_m^{\le}>e^3\ell/m\}}
 \le
 \frac{(R_m^{\le})^\ell}
      {e^{3\ell}(\ell/m)^\ell}
\]
gives
\[
 \Pp\left(
   \gamma_m>0,\ K_m\le \kappa,\quad
   R_m^{\le}>e^3\frac\ell m
 \right)
 \le
 e^{8\kappa-\ell}
 \le
 \eta.
\]
This is Markov's inequality written explicitly for the present random
variable.
\end{proof}

\subsection{Peeling over the random squared-norm statistic}

The moment estimate holds on every event \(\{K_m\le\kappa\}\). We apply it
at integer cutoffs and use a summable allocation of the failure
probability.

\begin{proof}[Proof of Theorem~\ref{thm:main-svm}]
Put
\[
 L_\delta:=\log\frac2\delta.
\]
For \(j\ge1\), define
\[
 \eta_j:=\delta\,2^{-(j+1)},
 \qquad
 \ell_j:=
 \left\lceil8j+\log\frac1{\eta_j}\right\rceil.
\]
Corollary~\ref{cor:layer}, applied with \(\kappa=j\), gives
\[
 \Pp\left(
   \gamma_m>0,\ K_m\le j,\quad
   R_m^{\le}>e^3\frac{\ell_j}{m}
 \right)
 \le
 \eta_j.
 \tag{4.6}
\]
Since
\[
 \sum_{j=1}^{\infty}\eta_j=\frac\delta2,
\]
with probability at least \(1-\delta/2\), none of the events in (4.6)
occurs.

On \(\{\gamma_m>0\}\), Lemma~\ref{lem:margin-norm} gives \(K_m\ge1\). Let
\[
 j:=\lceil K_m\rceil.
\]
Then
\[
 j-1<K_m\le j,
 \qquad
 j\le K_m+1\le2K_m.
\]
Since
\[
 \log\frac1{\eta_j}
 =
 \log\frac1\delta+(j+1)\log2
 =
 L_\delta+j\log2,
\]
we have
\[
 \begin{aligned}
 \ell_j
 &\le
 (8+\log2)j+L_\delta+1\\
 &\le
 2(8+\log2)K_m+L_\delta+1\\
 &\le
 19(K_m+L_\delta).
 \end{aligned}
 \tag{4.7}
\]
Therefore, outside an event of probability at most \(\delta/2\),
\[
 \gamma_m>0
 \quad\Longrightarrow\quad
 R_m^{\le}
 \le
 \frac{19e^3}{m}(K_m+L_\delta).
 \tag{4.8}
\]

Apply Lemma~\ref{lem:radius-tail} with
\[
 t=\frac{L_\delta}{m}
\]
when this number is less than one. It gives
\[
 \Pp\left(
   \rho_m>\frac{L_\delta}{m}
 \right)
 \le
 e^{-L_\delta}
 =
 \frac\delta2.
\]
If \(L_\delta/m\ge1\), the same inequality
\[
 \rho_m\le\frac{L_\delta}{m}
\]
holds deterministically because \(\rho_m\le1\). Consequently, with
probability at least \(1-\delta/2\),
\[
 \rho_m\le\frac{L_\delta}{m}.
 \tag{4.9}
\]

On the intersection of the good events (4.8) and (4.9),
inequality~\eqref{eq:risk-split} yields
\[
 \Risk(u_m)
 \le
 \frac{19e^3+1}{m}
 \left(
   K_m+\log\frac1\delta+\log2
 \right).
\]
Since \(K_m\ge1\),
\[
 K_m+\log\frac1\delta+\log2
 \le
 (1+\log2)
 \left(
   K_m+\log\frac1\delta
 \right).
\]
A union bound over the two exceptional events proves the stated \(K_m\)-form
of the theorem with
\[
 C_*=(19e^3+1)(1+\log2).
\]
The margin form follows on \(\{\gamma_m>0\}\) from
Lemma~\ref{lem:margin-norm}.
\end{proof}

\section{Conclusion}

The proof rests on one exact bridge between convex geometry and information.
For a random admissible deletion set \(B\), the possible minimum-norm
separators form a cloud \(U=u_B\) in the Hilbert space. The KKT coefficients
give the identity
\[
 \E\norm{U-\E U}^2
 =
 \sum_i\E\alpha_i(B)
 \left(
   1-\ip{\E U}{x_i}
 \right).
\]
Because the total coefficient mass is at most \(k\), positive separator
spread forces at least one coordinate whose deletion probability times its
retained-versus-deleted score jump is large.

Revealing that coordinate has a concrete geometric effect: the two
conditional mean separators are separated in the direction \(x_i\), so the
law of total variance removes a definite amount from the remaining
conditional spread. The concave potential
\[
 4t\log\frac{e^2k}{t}
\]
is chosen precisely so that this decrease controls the binary entropy of the
revealed deletion bit. Coordinates deleted more than half the time are
handled directly by \(2\E|B|\). Recursion gives
\[
 H(B)\le8k+2\E|B|,
\]
and the uniform law on a \(q\)-deletion family yields
\[
 |\mathcal F|\le e^{8k+2q}.
\]

The statistical transfer contains one further geometric observation. For a
ghost set counted by \(I_B^{(\kappa)}\), every deleted radius is no larger
than the largest retained radius. Therefore the retained maximum equals the
maximum radius of the full pooled sample, independently of \(B\). This
common normalization turns all counted ghost sets into one deterministic
deletion family. The factorial ghost identity then gives the in-radius error
bound, while Lemma~\ref{lem:radius-tail} separately controls the population
mass beyond the empirical maximum.

\appendix

\section{Convex-analytic background}
\label{sec:convex-appendix}

This appendix contains the standard Hilbert-space projection theorem and the
finite-constraint KKT certificate used in the main argument. Their proofs
are included for completeness but are not part of the covariance--entropy
mechanism. The final lemma records the convex-hull formula and the
measurability facts needed for the statistical statement.

\begin{lemma}[Hilbert projection principle]
\label{lem:hilbert-projection}
Let \(C\) be a nonempty closed convex subset of a real Hilbert space and let
\(x\) be a point of that space. There is a unique \(p\in C\) minimizing
\(\norm{x-p}\). It is characterized by
\[
 \boxed{\ip{x-p}{z-p}\le0\qquad(z\in C).}
 \tag{A.1}
\]
Conversely, any \(p\in C\) satisfying this inequality is the unique nearest
point to \(x\).
\end{lemma}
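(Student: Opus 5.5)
The plan is the classical one: existence from a minimizing sequence via the parallelogram law, uniqueness from strict convexity of the squared norm, and the variational inequality (A.1) from a first-order perturbation along segments in \(C\).

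\emph{Existence.} Put \(d:=\inf_{z\in C}\norm{x-z}\), which is finite because \(C\ne\varnothing\), and choose \(p_n\in C\) with \(\norm{x-p_n}\to d\). The parallelogram law applied to \(x-p_n\) and \(x-p_m\) gives
\[
 \norm{p_n-p_m}^2
 =
 2\norm{x-p_n}^2+2\norm{x-p_m}^2-4\norm{x-\tfrac12(p_n+p_m)}^2 .
\]
By convexity \(\tfrac12(p_n+p_m)\in C\), so the last term is at least \(4d^2\). Hence the right side is at most \(2\norm{x-p_n}^2+2\norm{x-p_m}^2-4d^2\), which tends to \(0\). The sequence is therefore Cauchy. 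By completeness it converges to some \(p\), which lies in \(C\) because \(C\) is closed, and continuity of the norm gives \(\norm{x-p}=d\).

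\emph{Uniqueness.} Suppose \(p\) and \(p'\) both attain \(d\). The same identity with \(p_n=p\) and \(p_m=p'\) gives \(\norm{p-p'}^2\le 2d^2+2d^2-4d^2=0\), so \(p=p'\).

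\emph{The inequality (A.1).} Let \(p\) be the minimizer and fix \(z\in C\). For \(s\in[0,1]\) the point \(p+s(z-p)\) lies in \(C\), so the function
\[
 f(s):=\norm{x-p-s(z-p)}^2=\norm{x-p}^2-2s\ip{x-p}{z-p}+s^2\norm{z-p}^2
\]
is minimized at \(s=0\) over \([0,1]\). Its right derivative at \(0\), namely \(-2\ip{x-p}{z-p}\), must therefore be nonnegative, which is exactly (A.1).

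\emph{Converse.} Let \(p\in C\) satisfy (A.1). For any \(z\in C\), write \(x-z=(x-p)+(p-z)\) and expand:
\[
 \norm{x-z}^2=\norm{x-p}^2+\norm{p-z}^2-2\ip{x-p}{z-p}\ge\norm{x-p}^2+\norm{p-z}^2 .
\]
So \(p\) is a nearest point, and \(z\ne p\) gives a strictly larger distance, so \(p\) is the unique nearest point.

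The only step with real content is existence: it is where completeness of the Hilbert space and closedness of \(C\) are used, and the parallelogram identity is what turns a minimizing sequence into a Cauchy sequence. The remaining steps are short algebraic expansions.
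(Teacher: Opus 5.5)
Your proof is correct and follows essentially the same route as the paper. Both use the parallelogram identity on a minimizing sequence for existence and uniqueness, the one-sided derivative of \(\norm{x-p-s(z-p)}^2\) at \(s=0\) along segments in \(C\) for (A.1), and the expansion of \(\norm{x-z}^2\) about \(p\) for the converse.
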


\begin{proof}
Put \(d=\inf_{z\in C}\norm{x-z}\) and choose \(p_r\in C\) with
\(\norm{x-p_r}\to d\). Convexity and the parallelogram identity give
\[
 \begin{aligned}
 \norm{p_r-p_s}^2
 &=2\norm{x-p_r}^2+2\norm{x-p_s}^2
   -4\norm{x-\tfrac12(p_r+p_s)}^2\\
 &\le2\norm{x-p_r}^2+2\norm{x-p_s}^2-4d^2.
 \end{aligned}
\]
Thus \((p_r)\) is Cauchy. Its limit \(p\) belongs to \(C\) and realizes the
minimum. Uniqueness follows from the same identity, or from strict convexity
of the squared norm.

For \(z\in C\) and \(0<t\le1\), the point \(p+t(z-p)\) lies in \(C\).
Minimality therefore implies
\[
 0\le
 \norm{x-p-t(z-p)}^2-\norm{x-p}^2
 =
 -2t\ip{x-p}{z-p}+t^2\norm{z-p}^2.
\]
Divide by \(t\) and let \(t\downarrow0\) to obtain (A.1). Conversely, if
(A.1) holds, then
\[
 \norm{x-z}^2
 =
 \norm{x-p}^2+\norm{z-p}^2-2\ip{x-p}{z-p}
 \ge\norm{x-p}^2,
\]
with equality only for \(z=p\).
\end{proof}

\begin{lemma}[Minimum-norm separator and KKT certificate]
\label{lem:kkt}
Let \(S\) be a finite index set and let \(x_i\in\Hh\) for \(i\in S\).
Assume that
\[
 \mathcal C_S
 :=
 \{u\in\Hh:\ip{u}{x_i}\ge1\text{ for every }i\in S\}
\]
is nonempty. Then \(\mathcal C_S\) contains a unique vector \(u_S\) of
minimum norm. There are coefficients \(\alpha_i\ge0\) such that
\[
 u_S=\sum_{i\in S}\alpha_i x_i,
 \qquad
 \alpha_i>0
 \quad\Longrightarrow\quad
 \ip{u_S}{x_i}=1.
\]
Consequently,
\[
 \boxed{\sum_{i\in S}\alpha_i=\norm{u_S}^2.}
 \tag{A.2}
\]
For \(S=\varnothing\), the same statement holds with \(u_S=0\) and the empty
certificate.
\end{lemma}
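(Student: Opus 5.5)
The proof has three parts: existence and uniqueness, the certificate, and the identity (A.2).

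\emph{Existence and uniqueness.} The set \(\mathcal C_S\) is an intersection of finitely many closed halfspaces, so it is closed and convex. It is nonempty by hypothesis. Lemma~\ref{lem:hilbert-projection} with \(x=0\) then gives a unique minimum-norm point \(u_S\), characterized by
\[
 \ip{u_S}{z-u_S}\ge0\qquad(z\in\mathcal C_S).
\]
If \(S=\varnothing\), then \(\mathcal C_S=\Hh\) and \(u_S=0\), and the empty certificate works trivially. Assume from now on that \(S\ne\varnothing\).

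\emph{Reduction to finite dimensions.} I would work in the finite-dimensional space \(E:=\operatorname{span}\{x_i:i\in S\}\) and write \(\Pi\) for orthogonal projection onto \(E\). Since \(\ip{\Pi u}{x_i}=\ip{u}{x_i}\), projection preserves feasibility. It also satisfies \(\norm{\Pi u}\le\norm u\), with equality only when \(u\in E\). Hence \(u_S\in E\).

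\emph{Certificate.} Let \(A:=\{i\in S:\ip{u_S}{x_i}=1\}\) be the active set. I would show that \(u_S\) lies in the closed convex cone \(K:=\{\sum_{i\in A}\beta_ix_i:\beta_i\ge0\}\). This cone is closed because it is finitely generated, and that is the one place where closedness needs care. Suppose \(u_S\notin K\). Separating \(u_S\) from \(K\), for instance by projecting \(u_S\) onto \(K\) via Lemma~\ref{lem:hilbert-projection}, gives a direction \(d\) with \(\ip{d}{x_i}\ge0\) for all \(i\in A\) and \(\ip{d}{u_S}<0\). For small \(\epsilon>0\), the point \(u_S+\epsilon d\) still satisfies the active constraints, and by continuity it satisfies the finitely many inactive ones, since those hold with strict inequality. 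Its squared norm is
\[
 \norm{u_S}^2+2\epsilon\ip{d}{u_S}+O(\epsilon^2)<\norm{u_S}^2,
\]
which contradicts minimality. Therefore \(u_S=\sum_{i\in A}\alpha_ix_i\) with \(\alpha_i\ge0\). Setting \(\alpha_i=0\) for \(i\notin A\) gives complementary slackness: \(\alpha_i>0\) implies \(\ip{u_S}{x_i}=1\).

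For the separation I would use the explicit direction \(d:=w-u_S\), where \(w:=\Pi_K(u_S)\). The projection inequality (A.1), applied with \(z=0\) and \(z=2w\), gives \(\ip{u_S-w}{w}=0\). Applied with \(z=w+x_i\), it gives \(\ip{u_S-w}{x_i}\le0\), so \(\ip{d}{x_i}\ge0\) for \(i\in A\). Finally,
\[
 \ip{d}{u_S}=\ip{w-u_S}{u_S}=-\norm{u_S-w}^2<0 .
\]

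\emph{The identity (A.2).} Complementary slackness gives
\[
 \norm{u_S}^2=\ip{u_S}{\textstyle\sum_i\alpha_ix_i}=\sum_i\alpha_i\ip{u_S}{x_i}=\sum_i\alpha_i ,
\]
since each term with \(\alpha_i>0\) has \(\ip{u_S}{x_i}=1\).

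The main obstacle is the certificate step, specifically closedness of the finitely generated cone. I would prove it by a Carathéodory argument: every element of \(K\) is a nonnegative combination of a linearly independent subset of \(\{x_i:i\in A\}\). The cone generated by a linearly independent family is a linear image of the orthant, hence closed. \(K\) is a finite union of such cones, so \(K\) is closed.
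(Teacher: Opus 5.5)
Your proposal is correct and follows the paper's proof essentially step for step. Both arguments obtain existence and uniqueness from the projection lemma, place \(u_S\) in the span, project \(u_S\) onto the cone of active constraints to produce a feasible descent direction, and derive (A.2) from complementary slackness. The one cosmetic difference is in proving that the finitely generated cone is closed. You write it as a finite union of images of orthants under injective linear maps, while the paper passes to a subsequence supported on one fixed independent subfamily; both rest on the same conic Carath\'eodory reduction.
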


\begin{proof}
Assume \(S\ne\varnothing\). The set \(\mathcal C_S\) is nonempty, closed,
and convex, so Lemma~\ref{lem:hilbert-projection}, applied with \(x=0\),
gives its unique minimum-norm point \(u_S\).

Let
\[
 L=\operatorname{span}\{x_i:i\in S\}.
\]
Orthogonal projection onto \(L\) preserves every constraint value and cannot
increase the norm, hence \(u_S\in L\). Let
\[
 A:=\{i\in S:\ip{u_S}{x_i}=1\}
\]
be the active set, and put
\[
 K:=\operatorname{cone}\{x_i:i\in A\}\subseteq L.
\]
The cone \(K\) is closed. Indeed, every element of a finitely generated cone
has a representation using a linearly independent subfamily: from any
representation with a linearly dependent positive support, one may move the
coefficient vector along a nonzero dependence until at least one coefficient
vanishes, while preserving nonnegativity. Iterating gives an independent
support. Along a convergent sequence in \(K\), pass to a subsequence using
one fixed independent subfamily. The corresponding coefficient vectors then
converge, because the synthesis map on that independent subfamily has a
continuous inverse on its range. The limit therefore remains in \(K\).

Suppose for contradiction that \(u_S\notin K\), and let \(c\) be the metric
projection of \(u_S\) onto \(K\), whose existence and characterization are
given by Lemma~\ref{lem:hilbert-projection}. Put
\[
 h:=c-u_S.
\]
The projection inequality gives
\[
 \ip{u_S-c}{z-c}\le0\qquad(z\in K).
\]
Using \(z=0\) and \(z=2c\) shows
\[
 \ip{u_S-c}{c}=0.
\]
It follows that
\[
 \ip{h}{z}\ge0\quad(z\in K),
 \qquad
 \ip{h}{u_S}=-\norm{u_S-c}^2<0.
\]
For every active constraint, \(\ip{h}{x_i}\ge0\). Every inactive constraint
has positive slack, and there are only finitely many of them. Therefore, for
all sufficiently small \(\varepsilon>0\), the point
\(u_S+\varepsilon h\) is still feasible. At the same time,
\[
 \norm{u_S+\varepsilon h}^2
 =
 \norm{u_S}^2
 +2\varepsilon\ip{u_S}{h}
 +\varepsilon^2\norm h^2
 <
 \norm{u_S}^2
\]
for sufficiently small \(\varepsilon\), a contradiction. Thus \(u_S\in K\),
so
\[
 u_S=\sum_{i\in A}\alpha_i x_i
\]
for suitable \(\alpha_i\ge0\). Extend the coefficients by zero on
\(S\setminus A\). Taking the inner product with \(u_S\) and using activity
on the positive support gives
\[
 \norm{u_S}^2
 =
 \sum_i\alpha_i\ip{u_S}{x_i}
 =
 \sum_i\alpha_i.
\]
\end{proof}

\begin{remark}
The certificate in Lemma~\ref{lem:kkt} need not be unique. Every choice has
the same total mass \(\norm{u_S}^2\), and the main argument may choose one
certificate independently for each deletion state.
\end{remark}

\begin{lemma}[Projection representation and measurability]
\label{lem:projection}
Let \(x_1,\ldots,x_s\in\Hh\) and let
\[
 z_*:=
 \argminop_{z\in\conv\{x_1,\ldots,x_s\}}\norm z.
\]
The unit-margin system
\[
 \ip{u}{x_i}\ge1,\qquad 1\le i\le s,
\]
is feasible if and only if \(z_*\ne0\). In that case its minimum-norm
solution is
\[
 \boxed{u_*=\frac{z_*}{\norm{z_*}^2}.}
 \tag{A.3}
\]
Moreover, \(z_*\) depends continuously on the finite tuple
\((x_1,\ldots,x_s)\), and \(u_*\) is continuous on the open feasible region
\(\{z_*\ne0\}\). If the tuple ranges in a separable Hilbert space, these maps
are measurable with respect to the corresponding finite product Borel
sigma-algebra. In particular, under the separable-support hypothesis of Section~\ref{sec:definitions},
all sample-dependent quantities used in the main text admit Borel versions.
\end{lemma}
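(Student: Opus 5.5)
The plan is to reduce the feasibility question to a minimum-norm problem over the convex hull, and then read off both the formula (A.3) and the regularity claims from the Hilbert projection principle (Lemma~\ref{lem:hilbert-projection}).

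\emph{Existence of \(z_*\).} The convex hull \(\conv\{x_1,\ldots,x_s\}\) is the continuous image of the simplex in \(\mathbb R^s\), so it is compact and convex. Lemma~\ref{lem:hilbert-projection}, applied with \(x=0\), gives the unique point \(z_*\). It is characterized by
\[
 \ip{z_*}{z-z_*}\ge0\quad(z\in\conv\{x_i\}),
 \qquad\text{equivalently}\qquad
 \ip{z_*}{x_i}\ge\norm{z_*}^2\quad(1\le i\le s).
\]

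\emph{Feasibility.} Suppose first that \(z_*\ne0\). Then \(u_*:=z_*/\norm{z_*}^2\) satisfies \(\ip{u_*}{x_i}\ge1\) for every \(i\), so the system is feasible. For minimality, take any feasible \(u\). Since \(z_*\) is a convex combination of the \(x_i\), we get \(\ip{u}{z_*}\ge1\). Cauchy--Schwarz then gives \(\norm u\ge1/\norm{z_*}=\norm{u_*}\). Equality in Cauchy--Schwarz together with \(\ip{u}{z_*}=1\) forces \(u=u_*\), which proves uniqueness and identifies the minimizer. Conversely, suppose \(z_*=0\). Then some convex combination \(\sum\lambda_ix_i\) equals \(0\), so every \(u\) satisfies \(\sum\lambda_i\ip{u}{x_i}=0<1\), and the system is infeasible.

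\emph{Continuity of \(z_*\).} View \(z_*\) as a function of the tuple \(x=(x_1,\ldots,x_s)\). I would compare two tuples \(x\) and \(x'\) with \(\max_i\norm{x_i-x_i'}\le\varepsilon\). If \(z_*=\sum\lambda_ix_i\), then \(\sum\lambda_ix_i'\) lies within \(\varepsilon\) of \(z_*\), so \(\norm{z'_*}\le\norm{z_*}+\varepsilon\), and by symmetry the minimal norms differ by at most \(\varepsilon\). Next I would use the strong-convexity form of the projection inequality: for every point \(z\) of the hull, \(\norm z^2\ge\norm{z_*}^2+\norm{z-z_*}^2\). Apply this to \(z=\sum\lambda_i'x_i\), where \(\lambda'\) are the weights representing \(z_*'\). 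This point is within \(\varepsilon\) of \(z_*'\), so its norm is at most \(\norm{z_*}+2\varepsilon\). Hence
\[
 \norm{z-z_*}^2\le(\norm{z_*}+2\varepsilon)^2-\norm{z_*}^2,
\]
which tends to \(0\) locally uniformly as \(\varepsilon\to0\). Therefore \(\norm{z'_*-z_*}\to0\).

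\emph{Continuity of \(u_*\) and measurability.} Continuity of \(z_*\) shows that \(\{z_*\ne0\}\) is open, and \(u_*=z_*/\norm{z_*}^2\) is continuous there. In a separable Hilbert space, the Borel sigma-algebra of the finite product coincides with the product Borel sigma-algebra. Continuous maps are Borel, and the open set \(\{z_*\ne0\}\) is Borel, so extending \(u_*\) by \(0\) off that set gives a Borel map. The sample-dependent quantities are then obtained by composing with continuous operations: the signed vectors \(Z_i=Y_iX_i\), norms, maxima, the scores \(\ip{u}{x}\), and the margin \(\gamma_m\), which equals \(\norm{z_*}\) when \(z_*\ne0\) by Lemma~\ref{lem:margin-norm} and (A.3). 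Risks are integrals of jointly Borel indicators, so Fubini gives their measurability; the separable-support reduction to \(\Hh_0\) described in Section~\ref{sec:definitions} handles the general case.

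The main obstacle is the continuity estimate for \(z_*\), since the convex-hull weights themselves need not vary continuously. The device that avoids this is to transport the weights from one tuple to the other and then invoke the strong-convexity form of the projection inequality, as above.
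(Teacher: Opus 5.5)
Your proof is correct. The feasibility criterion, formula (A.3), and the measurability reduction follow the paper's argument essentially line by line. Your uniqueness argument via the equality case of Cauchy--Schwarz is a harmless addition; the paper gets uniqueness from the projection principle.

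The genuine difference is the continuity of \(z_*\). The paper argues softly:
\begin{itemize}
\item it chooses minimizing weights \(\lambda^{(r)}\) on the compact simplex \(\Delta_s\);
\item it notes that the objectives \(\lambda\mapsto\norm{\sum_i\lambda_i x_i^{(r)}}\) converge uniformly;
\item it extracts convergent subsequences and uses uniqueness of the minimum-norm point to identify every limit as \(z_*\).
\end{itemize}

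You instead transport the optimal weights from one tuple to the other and use the strong-convexity consequence \(\norm z^2\ge\norm{z_*}^2+\norm{z-z_*}^2\) of (A.1). You should make the final triangle inequality \(\norm{z_*'-z_*}\le\norm{z_*'-z}+\norm{z-z_*}\) explicit; you leave it implicit, but it is needed. With it, your argument yields the explicit estimate
\[
 \norm{z_*'-z_*}\le\varepsilon+2\sqrt{\varepsilon\bigl(\norm{z_*}+\varepsilon\bigr)},
\]
which is a local H\"older-\(1/2\) modulus. You get this without subsequence extraction, and without needing the weights themselves to vary continuously.

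The paper's route is a Berge-type argument. It would work for any jointly continuous objective with unique minimizers, but it gives no rate. Yours exploits the quadratic structure and is quantitative. Either one suffices for the openness of \(\{z_*\ne0\}\) and for the Borel extension that the main text needs.
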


\begin{proof}
If the unit-margin system is feasible, then \(0\) cannot lie in the convex
hull: a feasible \(v\) has inner product at least one with every convex
combination of the \(x_i\).

Conversely, assume \(z_*\ne0\). Lemma~\ref{lem:hilbert-projection}, applied
to the compact convex hull, gives
\[
 \ip{z_*}{x_i-z_*}\ge0\qquad(1\le i\le s).
\]
Thus \(z_*/\norm{z_*}^2\) is feasible. If \(v\) is any feasible vector and
\[
 z_*=\sum_i\lambda_i x_i,
 \qquad
 \lambda_i\ge0,
 \qquad
 \sum_i\lambda_i=1,
\]
then
\[
 1\le\ip{v}{z_*}\le\norm v\,\norm{z_*}.
\]
Thus \(\norm v\ge1/\norm{z_*}\), with equality for the displayed candidate,
which proves (A.3).

For continuity, let \(x_i^{(r)}\to x_i\). Choose
\(\lambda^{(r)}\in\Delta_s\) minimizing
\[
 \norm{\sum_i\lambda_i x_i^{(r)}}.
\]
The corresponding objective functions converge uniformly on the compact
simplex
\[
 \Delta_s:=\left\{\lambda\in[0,1]^s:\sum_{i=1}^s\lambda_i=1\right\}.
\] Every subsequence of \((\lambda^{(r)})\) has a further
convergent subsequence, and the associated convex combinations converge to a
minimum-norm point of \(\conv\{x_1,\ldots,x_s\}\). That point is unique, so
every such limit is \(z_*\). Hence the minimum-norm points converge.

Extending \(z/\norm z^2\) by zero at \(z=0\) gives a Borel map on the
topological product \(\Hh^s\). When \(\Hh\) is separable,
\(\mathcal B(\Hh^s)=\mathcal B(\Hh)^{\otimes s}\), so this map is measurable
for the product sigma-algebra used by the sample law. The usual
monotone-class argument for parameterized integrals of nonnegative Borel
functions then shows that the risks and truncated risks defined in Section~\ref{sec:definitions} are
Borel functions of the sample. Under the statistical assumptions of
Section~\ref{sec:definitions}, apply this argument inside \(\Hh_0\) and extend the resulting maps
by fixed values off \(\Hh_0^s\).
\end{proof}

\end{document}